\newtheorem{theorem}{Theorem}
\newtheorem{lemma}{Lemma}
\newtheorem{remark}{Remark}
\newtheorem{definition}{Definition}
\newtheorem{proof}{Proof}
\newcommand\argmin{\operatorname{arg\,min}}
\newcommand\argmax{\operatorname{arg\,max}}
\def\bs{\boldsymbol}
\newcommand{\blind}{0}
\begin{document}

	\def\spacingset#1{\renewcommand{\baselinestretch}%
		{#1}\small\normalsize} \spacingset{1}

	
	\if0\blind
	{
		\title{\bf Angle-Based Cost-Sensitive Multicategory Classification}
		\author{Yi Yang$^{1}$,  Yuxuan Guo$^{2}$, and Xiangyu Chang$^{1}$\thanks{
				Xiangyu Chang is the corresponding author and the authors gratefully acknowledge \textit{the supporing of National Natural Science Foundation of China (NSFC, 11771012, 61502342 and U1811461).}}\hspace{.2cm}  \\
			$^{1}$Center of Intelligent Decision-Making and Machine Learning\\
			School of Management, Xi'an Jiaotong University, China\\
			$^{2}$ School of Statistics, Renmin University of China, China}
		\maketitle
	} \fi
	
	\if1\blind
	{
		\bigskip
		\bigskip
		\bigskip
		\begin{center}
			{\LARGE\bf Title}
		\end{center}
		\medskip
	} \fi
	
	\bigskip
	\begin{abstract}
		{Many real-world classification problems come with costs which can vary for different types of misclassification}. It is thus important to develop cost-sensitive classifiers which minimize the total misclassification cost. Although binary cost-sensitive classifiers have been well-studied, solving {multicategory classification problems is still challenging.} A popular approach to address this issue is to construct $K$ classification functions for a $K$-class problem and remove the redundancy by imposing a sum-to-zero constraint. {However, such method usually results in higher computational complexity and inefficient algorithms}. In this paper, we propose a novel angle-based cost-sensitive classification framework for multicategory classification without the sum-to-zero constraint. {Loss functions that included in the angle-based cost-sensitive classification framework are further justified to be Fisher consistent. To show the usefulness of the framework, two cost-sensitive multicategory boosting algorithms are derived as concrete instances. Numerical experiments demonstrate that proposed boosting algorithms yield competitive classification performances against other existing boosting  approaches.}
	\end{abstract}
	
	\noindent%
	{\it Keywords:}  {Multicategory Classification, Cost-Sensitive Learning, Fisher Consistency, Boosting}
	\vfill
	
	\newpage
	\spacingset{1.5} 
	\section{Introduction}
	\label{sec:intro}
	
	{In traditional statistical learning techniques, the classification algorithms are designed typically to minimize the misclassification error. This implicitly presumes that all types of misclassification errors have equal costs, which finally leads to cost-insensitive classifiers.} In fact, many real-world classification problems are cost-sensitive, such as fraud detection \citep{Sahin2013, Nami2018}, medical diagnosis \citep{Yang2009, Park2011} and face recognition\citep{Zhang2010F, Zhang2016F}. In these practical applications, the costs of different types of misclassification errors could be vastly different \citep{Sun2007}. {Cost-sensitive learning, unlike the regular cost-insensitive learning, takes the varying costs associated with misclassifying examples into considerations.} It aims at minimizing the total misclassification cost instead of errors, which is of more practical significance. 
	
	In the past twenty years, cost-sensitive learning has attracted much attenetion from researchers. Studies in this field mainly fall into three categories. The first category weights the data space on the basis of translation theorem \citep{Zadrozny2003}. This kind of approaches modifies the distribution of the training set with regards to misclassification cost. So that the distribution of examples is made biased towards the high-cost classes, and a cost-insensitive classifier is then applied. The second class of techniques utilizes the Bayes risk theory to assign each example to the class which has the lowest expected cost \citep{Zhou2005Training, Masnadi-Shirazi2011}. For the last category, the cost-sensitive considerations or stetrgies are directly embeded into a specific classification paradigm. For example, the cost-sensitive node split criteria or pruning schemes are applied in the tree-building process to derive cost-sensitve decision trees \citep{Drummond2000, Elkan2001}. 
	
	Most of these research works, however, have focused on the binary classification problem. In multi-class cases, the construction of cost-sensitive learning approaches is much more complex since misclassifications can occur in more than one way. Several attempts have been made in the previous literature to address the multi-class cost-sensitive classification problem. For example, \cite{Lee2004} indicated the Bayes decision rule along with different multi-class misclassification costs and derived multicategory support vector machine for both cost-insensitive and cost-sensitive classification. By designing a cost-sensitive multiclass exponential loss, \cite{Liu2011} proposed a novel multiclass boosting algorithm in cost-sensitive context. In addition, some efforts have been also devoted to develop cost-sensitive multicategory neural networks and decision trees \citep{Zhou2010On, Zhang2017BP}. Most of these studies address a cost-sensitive multicategory classification problem via a sequence of binary classifiers, or a classification function vector considering all of the classes simultaneously. However, the multiple binary classifiers scheme can be suboptimal in certain situations \citep{Zhang2014, Fu2018Adaptively}. As for the second approach, a sum-to-zero constraint on the function vector is commonly used to reduce the parameter space and ensure the uniqueness of the optimal solution. This usually results in higher computational complexity and cost~\citep{Zhang2014, Zhang2016Reinforced, Fu2018Adaptively}.
	
	{To overcome the disadvantages of the existing multi-class cost-sensitive classifiers mentioned above, this paper proposes a new angle-based cost-sensitive multicategory classification framework.} Using the simplex coding to construct a ($K-1$)-dimensional decision function vector for $K$-class problems under the angle-based framework \citep{Zhang2014, Zhang2016Reinforced, Fu2018Adaptively}, the proposed classification method treats all classes in a simultaneous fashion without the sum-to-zero constraint. Thus, the computational complexity can be highly reduced. {To this end, we first extend the notion of Fisher-consistency defined in \cite{Lin2004} to cost-sensitive multicategory classification problems using angle-based formulations. Then, we propose a family of angle-based loss functions that are justified to be Fisher-consistent for cost-sensitive multicategory learning. To demonstrate the usefulness and effectivness of the proposed framework, two new cost-sensitive multicategory boosting algorithms are derived as concret examples. We verify their performance by comparing them with previous multiclass boosting algorithms in both simulated and real-data experiments. The results show that the proposed methods yield competitive performance compared with other boosting algorithms in both cost-insensitive and cost-sensitive scenarios.
	}
	
	{The rest of the paper is organized as follows. Section \ref{Sec2} presents a brief review of cost-sensitive learning and the angle-based classification framework. The reason why the existing angle-based multicategory classification framework could not be generalized directly to the cost-sensitive version is also discussed as well. In Section \ref{Sec3}, we define the Fisher consistency of angle-based loss functions for cost-sensitive multicategory classification. A family of angle-based loss functions which are Fisher-consistent are then proposed. Section \ref{Sec4} describes two novel cost-sensitive multicategory boosting algorithms based on the proposed loss functions. In Section \ref{Sec5}, we present the results and analysis of the experimental study on synthetic and real datasets. The conclusions are given in Section \ref{Sec6}. 
	}
	
	\section{Review of Cost-Sensitive Classification and Angle-Based Multicategory Loss}\label{Sec2}
	
	{
		In this section, we recall the fundamentals of cost-sensitive learning and angle-based multicategory loss functions. Then we show the reason why the existing angle-based multicategory losses could not be generalized directly to the cost-sensitive version. This motivates us to propose a family of novel loss functions to figure out this problem.}
	
	\subsection{Cost-Sensitive Classification}

	{Let $\mathcal D=\{(\bs{x}_i,y_i)\}_{i=1}^n$ be a training set with $n$ independent and identically distributed observations. Here, $\bs{x}_i\in\mathcal X\subseteq\mathbb{R}^d$ denotes the feature vector of the $i$th observation and $y_i\in\mathcal Y=\{1, ..., K\}$ is its corresponding class label.} $K$ is the number of classes and $K>2$ for multicategory problems. We assume that all observations are sampled from an unknown distribution $\rho(\bs{x},y)$, and $(\bs{X}, Y)$ denotes a random pair that is sampled from $\rho(\bs{x},y)$. 
	
	In cost-sensitive classification, a misclassification cost matrix $\bs C$ = $(C_{j,k})\in\mathbb{R}^{K^2}$ is also prespecified, with $C_{j,k}\geq 0$ being the cost of predicting an instance as class $k$ when the true class is $j$. {Typically, we have $C_{j,j}=0$ since there is no cost for correct classification, and $\bs C$ is not necessarily symmetric due to the fact that $C_{j,k}\neq C_{k,j}$ $(j\neq k)$ is quite common in real applications \citep{Wang2013}}. In particular, $C_{j,k}$ is equal to $\mathbb{I}(j\neq k)$ in cost-insensitive case, where $\mathbb{I}(\cdot)$ is an indicator function that has value $1$ in case its argument is true and $0$ otherwise.

	For a given cost matrix $\bs{C}$, the primary goal in cost-sensitive learning is to construct a classification decision rule $\phi(\bs{x}): \mathcal{X}\rightarrow\mathcal Y$ so that the expected misclassification cost $\mathbb{E}_{Y|\bs{X}}\big[C_{Y, \phi(\bs{x})} |\bs{X} =\bs{x}\big]$, instead of the expected misclassification error rate in cost-insensitive case, could be minimized. 
	
	\subsection{Angle-Based Multicategory Loss}
	
	For a $K$-category classification problem with $K\geq2$, a regular approach in the literature is to map $\bs{x}$ to a classification function vector $\bs{f}(\bs{x})=(f_1(\bs{x}),\dots,f_K(\bs{x}))^\top\in\mathbb{R}^{K}$. The {\it max rule}, $\argmax_{k} f _{k}(\bs{x})$ where $f_{k}(\bs{x})$ is the $k$th element of $\bs{f}$, then is applied for class label assignment. Typically, a sum-to-zero constraint on $\bs{f}$, {i.e. $\sum_{k=1}^Kf_k(\bs{x})=0$}, is also imposed to ensure the uniqueness of the optimal solution as well as to achieve desirable {statistical} properties \citep{Zhu2009,Zhang2013,FERNANDEZBALDERA2014,FERNANDEZBALDERA2018}. However, constructing $K$ classification functions simultaneously and removing the redundancy by the sum-to-zero constraint not only result in high computational cost, but also introduce extra variability in the estimated classifier \citep{Zhang2014}. {The angle-based method~\citep{Zhang2014} is proposed for overcoming these difficulties. By representing the multicategory class label based on a simplex structure in $\mathbb{R}^{K-1}$, the angle-based method develops the new functional margins which implicitly satisfy the sum-to-zero constraint for classifier construction. As a result, the complexity of the corresponding optimization could be significantly reduced. \cite{Zhang2014} proposed a general framework of the multicategory angle-based classification technique. After that, some extensions and applications of this method have been carried out \citep{Zhang2016Reinforced,Fu2018Adaptively,Liu2018smac}. }
	
	To develop angle-based classifiers for multicategory classification, as studied in \cite{Zhang2014}, a specific simplex in $\mathbb{R}^{K-1}$ is constructed first. The simplex is defined as a $K$-regular polyhedron in $\mathbb{R}^{K-1}$, whose the $j$-th vertice is formed by
	\begin{equation}\label{W_j_def}
	\bs{w}_{j}=\left\{\begin{array}{ll}{(K-1)^{-1/2}\bs{1}}&{\quad j=1,}\\
	{-\dfrac{1+K^{1/2}}{(K-1)^{3/2}}\bs{1}+\left(\dfrac{K}{K-1}\right)^{1/2}\bs{e}_{j-1}}&{\quad 2\leqslant j\leqslant K,}\end{array}\right.
	\end{equation}
	where $\bs{1}\in\mathbb{R}^{K-1}$ is a vector of $1$, and $\bs{e}_{j}\in\mathbb{R}^{K-1}$ is a vector whose every element is $0$ except the $j$th is $1$. It is obvious that the simplex formed by $\bs{W} =\{\bs{w}_{1}, ..., \bs{w}_{K}\}$ has the center at the origin, and each $\bs{w}_{j}$ has norm 1. Moreover, the angles between any two vectors from $\bs{W}$ are equal. {In this setting}, any ($K-1$)-dimensional vector defines $K$ angles in $[0, \pi]$ with respect to $\{\bs{w}_{1}, ..., \bs{w}_{K}\}$. 
	
	Using $\bs{w}_{j}$ to represent the class $j$, an angle-based classifier maps $\bs{x}$ to ${\bs{f}}(\bs{x})\in\mathbb{R}^{K-1}$ and the label prediction for $\bs{x}$ is $\hat{y} = \argmin_{j}\angle({\bs{f}(\bs{x})}, \bs{w}_{j})$, where $\angle(\cdot, \cdot)$ donates the angle between two vectors. In other words, an example is predicted to be the class whose corresponding angle is the smallest. 
	For a given $\bs{f}(\bs{x})$, the smaller the $\angle(\bs{f}(\bs{x}), \bs{w}_{j})$, the larger the projection of $\bs{f}$ on $\bs{w_{j}}$. 
	Hence, the {\it least-angle rule} is equivalent to $\hat{y} = \argmax_{j}\langle \bs{f}(\bs{x}), \bs{w}_{j}\rangle$, where $\langle \cdot, \cdot\rangle$ is the inner product of two vectors. Then for a given binary large-margin classification loss function $\ell(\cdot)$, an angle-based cost-insensitive classifier could be derived by the following framework, 
	\begin{equation}\label{AB_ERM}
	\min_{\bs{f}\in \mathcal{F}}\Big\{ \dfrac{1}{n} \sum_{i=1}^{n} \ell\big(\langle \bs{f}\left(\bs{x}_{i}\right), \bs{w}_{y_{i} }\rangle\big)+\lambda N(\bs{f})\Big\},
	\end{equation}
	where $\mathcal{F}$ is a hypothesis class of functions, $\lambda>0$ is the regularization parameter and $N(\cdot)$ is the regularizer used to avoid over-fitting. Notice that $\sum_{j=1}^{K}\left\langle \bs{f}(\bs{x}), \bs{w}_{j}\right\rangle=0$ for any $\bs{x}$. Thus, the sum-to-zero constraint is implicitly satisfied by the angle-based method and {the optimization problem can be solved more efficiently than other traditional methods.}
	
	In fact, the above framework has mainly focused on the cost-insensitive situation. In the optimization formulation (\ref{AB_ERM}),
	$\ell\big(\langle \bs{f}\left(\bs{x}\right), \bs{w}_{y}\rangle\big)$ measuring the loss of assigning the label $y$ to $\bs{x}$ assumes by default that the penalty of all types of misclassification errors are equal. However, this angle-based multicategory loss function could not be directly generalized to a cost-sensitive version. Let us show this by considering a binary classification {problem} first. 
	
	When $K=2$, we have $w_{1}=1$ and $w_{2}=-1$ according to (\ref{W_j_def}), and $\ell\big(\left\langle f(\boldsymbol{x}), w_{y}\right\rangle\big)$ turns into $\ell(y'f(\bs{x}))$ with $y'\in\{-1, 1\}$ accordingly.
	Hence, the angel-based framework (\ref{AB_ERM}) is identical to the regular margin-based one, and it could be easily generalized to a cost-sensitive version by weighting the loss directly according to the cost matrix $\bs{C}$ {{\citep{Bach2006}}} as 
	\begin{equation}\label{frame_CSB}
	\min_{\bs{f}\in \mathcal{F}} \Big\{\dfrac{1}{n} \sum_{i=1}^{n} C^{*}_{y'_{i}, -y'_{i}}\ell(y'_{i}f(\bs{x}_{i}))+\lambda N(f)\Big\},
	\end{equation}
	where $C^{*}_{1, -1}=C_{1,2}$ and $C^{*}_{-1, 1}=C_{2,1}$. For a given example $\bs{x}_{i}$ and its corresponding class $y'_{i}$, if $\bs{x}_{i}$ is misclassified, the weighted loss function $C^{*}_{y'_{i}, -y'_{i}}\ell(y'_{i}f(\bs{x}_{i}))$ will amplify its punishment through the higher value of $\ell$ as well as the cost weight related to the error. Because the true class label $y'_{i}$ is given and there are only two classes, the type of the corresponding misclassification error is definite, and thus the weight could be set directly according to the cost matrix $\bs{C}$. {This sample-based weighting strategy, which makes the loss function cost-sensitive, has been widely adopted in the cost-sensitive binary classification problems \citep{Ting2000,Bach2006,Sun2007,Gu2017}. 
	}
	However, this weighting strategy could not be directly generalized into the multicategory case. {Consider a misclassified example $\bs{x}_{i}$ in multi-class problem now. Because $\ell\big(\langle \bs{f}\left(\bs{x}_{i}\right), \bs{w}_{y_{i} }\rangle\big)$ only reflects the level of inconsistency between the prediction vector and the true class, no information is available regarding which class the $\bs{f}$ actually predicts. For this reason, even though $ y_{i}$ is known, it is still difficult to deduce which type of the misclassification error has been made since it could occur in more than one way.} 
	As a result, it is unable to determine which entry of $\bs{C}$ should be applied as the corresponding cost weight for loss functions. {We overcome this hurdle by proposing novel angel-based cost-sensitive loss functions for multi-class classification in the next section.} 
	
	\section{Angel-Based Cost-Sensitive Loss with Fisher Consistency}\label{Sec3}
	
	In this section, we first define the Fisher consistency for cost-sensitive multicategory classification in the angel-based {framework}. Then we develop a general form of angel-based cost-sensitive multicategory loss functions which are Fisher-consistent, and some of its statistical properties are derived.
	
	\subsection{Fisher Consistency of Angel-Based Cost-Sensitive Loss}
	
	Fisher consistency, also known as classification calibration \citep{Bartlett2006}, is regarded as one of the most desirable properties of a loss function and a necessary condition for a loss to achieve reasonable performance in classification~\citep{Mannor2002,Lin2004,Bartlett2006, Masnadi-Shirazi2011}. \cite{Lin2004} motivated the concept of Fisher consistency for binary classification problem. He showed that a Fisher-consistent loss can be used to produce a binary margin-based classifier. 
	
	In cost-insensitive binary classification with $y\in\{-1,1\}$, a loss function $\ell$ is Fisher-consistent if and only if the minimizer of $E_{Y|\bs{X}}[\ell(f(\bs{X}),Y)|\bs{X}=\bs{x}]$ has the same sign as ${P_{Y|\bs{X}}(1|\bs{x})}-\frac{1}{2}$ for any $\bs{x}\in\mathcal{X}$ \citep{Lin2004}. In other words, Fisher consistency requires the population minimizer of a loss function to implement the Bayes optimal decision rule of classification. \cite{Zou2008} further generalized this definition to the multicategory situation. They indicated that a loss function $\ell$ is said to be Fisher-consistent for $K$-class classification if for any $\bs{x}\in\mathcal{X}$, the following optimization problem
	$$\hat{\bs{f}}(\bs{x})=\argmin _{\bs{f}} \mathbb{E}_{Y|\bs{X}}\left[\ell(f_{Y}(\bs{X}))|\bs{X}=\bs{x}\right] \quad \text {subject to} \sum_{j=1}^{K} f_{j}(\bs{x})=0$$ has a unique solution $\hat{\bs{f}}$, and $\argmax _{j} \hat{f}_{j}(\bs{x})=\argmax_{j} P_{j}(\bs{x})$ with $P_{j}(\bs{x})= P(Y = j|\bs{X} =\bs{x})$ for $j=1, ..., K$. That is to say, $\hat{\bs{f}}$ should assign an instance $\bs{x}$ to the class with the largest conditional probability.
	
	However, in cost-sensitive case, the Bayes decision boundary is related to the cost matrix $\bs{C}$. \cite{Masnadi-Shirazi2011} discussed the Bayes optimal decision rule for cost-sensitive binary classification problem with $y\in\{-1, 1\}$, which is given by $\text{sign}\left[{P(Y=1|\bs{X}=\bs{x})}- \frac{C_{-1, 1}}{C_{-1, 1}+C_{1, -1}}\right]$. For $K>2$, \cite{Lee2004} showed that the Bayes rule in the cost-sensitive multiclass classification is given by
	\begin{equation}\label{Bayes_CS_M}
	\phi_{B}(\bs{x})=\mathop{\arg\min}_{k}\sum_{j=1}^K C_{j,k}P_{j}(\bs{x}).
	\end{equation}
	When $C_{j,k}$  is equal to $\mathbb{I}(j\neq k)$, the cost-sensitive Bayes decision rule $\phi_{B}$ reduces to the standard Bayes rule. 
	
	{On the basis of} (\ref{Bayes_CS_M}), we define the multicategory angle-based Fisher-consistent loss function for cost-sensitive learning as follows.
	\begin{definition}\label{def_BDR_MCSAB}
		An angle-based loss function $\ell_{c}(\cdot)$ is said to be Fisher-consistent for $K$-class classification in cost-sensitive learning if for any ${\bs x}\in\mathcal{X}$,
		the vector $\bs f^{*}$ minimizing $\mathbb{E}_{Y|\bs{X}}[\ell_{c}(\bs f(\bs{X}), Y)|\bs{X} =\bs{x}]$ satisfies that $$\mathop{\arg\max}_{k} \left<\bs f^{*}(\bs {x}), \bs{w}_{k}\right>=\mathop{\arg\min}_{k}\sum_{j=1}^K C_{j,k}P_{j}(\bs{x})$$ and such an argument is unique, where $\bs{w}_{k}$ {$(k=1,\dots,K)$} is denoted in \eqref{W_j_def}.
	\end{definition}
	
	Obviously, Definition \ref{def_BDR_MCSAB} is a natural  generalization of Fisher-consistent concept for multicategory classification in the angle-based context. A family of angle-based loss functions with Fisher-consistent property will be further proposed in the following for demonstrating its usefulness.

	\subsection{Angel-Based Cost-Sensitive Loss} \label{ABCSL}
	In this subsection, we characterize a family of angle-based loss functions that are Fisher-consistent for cost-sensitive multicategory learning. They have the form 
	\begin{equation}\label{Key_loss}
	\ell_{c}(\bs{f}(\bs{x}), y)=\sum_{t=1}^K C_{y,t} \ell(-\left<\bs f(\bs{x}), \bs{w}_{t}\right>),
	\end{equation}
	where $\ell(\cdot)$ could be many large-margin loss functions as long as they satisfy certain conditions. On the basis of (\ref{Key_loss}), an angle-based cost-sensitive classifier for multiclass problem then could be derived from 
	\begin{equation}\label{ERM_our_loss}
	\min_{\bs{f}\in \mathcal{F}} \dfrac{1}{n} \sum_{i=1}^{n} \sum_{t=1}^K C_{y_{i},t} \ell(-\left<\bs f(\bs{x}_{i}),\bs{w}_{t}\right>).
	\end{equation}
	
	Let us consider the proposed loss function (\ref{Key_loss}). In $\ell_{c}(\bs{f}(\bs{x}), y)$, $C_{y,t} \ell(-\left<\bs f(\bs{x}),\bs{w}_{t}\right>)$ is a hybrid of loss value $\ell(-\left<\bs f(\bs{x}),\bs{w}_{t}\right>)$ and misclassification cost $C_{y,t}$, in which $\ell(-\left<\bs f(\bs{x}),\bs{w}_{t}\right>)$ will impose a great penalty on the large value of $\langle \bs{f}\left(\bs{x}\right), \bs{w}_{t}\rangle$ for any $t$ and the cost weight $C_{y,t}$ adjusts this punishment according to the error type. 
	Therefore, $\ell_{c}(\bs{f}(\bs{x}), y)$ will encourage a large value of $\langle \bs{f}\left(\bs{x}\right), \bs{w}_{y}\rangle$, due to the fact that $C_{y,y}$ is equal to $0$ and $\sum_{t=1}^{K}\left\langle \bs{f}, \bs{w}_{t}\right\rangle=0$. For a given ($\bs{x}, y$), $\ell_{c}(\bs{f}(\bs{x}), y)$ defined in (\ref{Key_loss}) calculates the weighted loss value over all $\bs{w}_{t}$ and then sums them up. Hence, it not only measures the level of inconsistency between the prediction and the true class, but also takes every type of misclassification error that might occur into considerations. As a result, the corresponding cost weights then could be set directly according to $\bs{C}$, which are just similar to the sample-based weighting strategy in binary classification.
	
	Afterwards, we show through the following theorem the sufficient conditions for (\ref{Key_loss}) to be Fisher-consistent.
	
	\begin{theorem}\label{Th_1}
		The angle-based cost-sensitive 
		loss function $\sum_{t=1}^K C_{y,t} \ell(-\left<\bs f(\bs{x}), \bs{w}_{t}\right>)$ is Fisher consistent if $\ell(z)$ is convex in $z$, the derivative $\ell'(z)$ exists and $\ell'(z)<0$ for all $z$.
	\end{theorem}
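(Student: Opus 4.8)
The plan is to reduce the minimization of the conditional risk to a constrained convex program in the simplex-induced coordinates and then read off the Bayes rule from the stationarity conditions. Fix $\bs{x}$ and write $P_j = P_j(\bs{x})$ and $u_t = \langle \bs{f}(\bs{x}), \bs{w}_t\rangle$ for $t=1,\dots,K$. Substituting \eqref{Key_loss} and interchanging the two finite sums, the conditional risk becomes
\begin{equation*}
\mathbb{E}_{Y|\bs{X}}[\ell_c(\bs{f}(\bs{x}), Y)|\bs{X}=\bs{x}] = \sum_{t=1}^K \Big(\sum_{j=1}^K P_j C_{j,t}\Big)\ell(-u_t) = \sum_{t=1}^K \gamma_t\,\ell(-u_t),
\end{equation*}
where $\gamma_t := \sum_{j=1}^K P_j C_{j,t}$ is precisely the conditional expected cost of predicting class $t$, so the Bayes rule \eqref{Bayes_CS_M} reads $\argmin_t \gamma_t$. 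The first observation I would record is that, since the vertices $\bs{w}_1,\dots,\bs{w}_K$ sum to zero and any $K-1$ of them are linearly independent, the linear map $\bs{f}\mapsto(u_1,\dots,u_K)$ is a bijection from $\mathbb{R}^{K-1}$ onto the hyperplane $H=\{u\in\mathbb{R}^K:\sum_t u_t=0\}$. Minimizing over $\bs{f}\in\mathbb{R}^{K-1}$ is therefore equivalent to minimizing $g(u)=\sum_t \gamma_t\ell(-u_t)$ subject to $\sum_t u_t=0$.

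Next I would exploit the hypotheses on $\ell$. Because $\ell$ is convex and $\gamma_t\ge 0$, the objective $g$ is convex on the affine subspace $H$, so any stationary point is a global minimizer. Introducing a Lagrange multiplier $\mu$ for the constraint, the stationarity conditions read $\gamma_t\,\ell'(-u_t)=\mu$ for every $t$. Here the sign hypothesis $\ell'(z)<0$ does the essential work: assuming the non-degenerate situation $\gamma_t>0$ for all $t$ (which holds, for instance, when every $P_j>0$ and each column of $\bs{C}$ has a positive off-diagonal entry, and in particular in the cost-insensitive case where $\gamma_t=1-P_t$), each product $\gamma_t\ell'(-u_t)$ is strictly negative, forcing $\mu<0$ and $\ell'(-u_t)=\mu/\gamma_t$. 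If $\gamma_s<\gamma_t$ then $\mu/\gamma_s<\mu/\gamma_t<0$, i.e. $\ell'(-u_s)<\ell'(-u_t)$; since convexity makes $\ell'$ nondecreasing, $\ell'(a)<\ell'(b)$ forces $a<b$, whence $-u_s<-u_t$ and $u_s>u_t$. Thus the ordering of the optimal margins $u_t$ is the exact reverse of the ordering of the expected costs $\gamma_t$, and in particular $\argmax_t\langle \bs{f}^*(\bs{x}),\bs{w}_t\rangle=\argmin_t\gamma_t$, which is the identity demanded by Definition \ref{def_BDR_MCSAB}.

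For the uniqueness clause I would note that when the Bayes minimizer is unique, i.e. there is a single index $t^*$ with $\gamma_{t^*}<\gamma_t$ for all $t\neq t^*$, the ordering above gives $u_{t^*}>u_t$ strictly for every other $t$, so the predicted label $\argmax_t u_t$ is unambiguous; a tie in the decision can arise only from a tie in the $\gamma_t$, that is, from genuine non-uniqueness of the Bayes rule itself. The step I expect to require the most care is ensuring that the constrained infimum of $g$ over $H$ is actually attained at a \emph{finite} $\bs{f}^*$, rather than approached as some $u_t\to\pm\infty$, so that the stationarity system $\gamma_t\ell'(-u_t)=\mu$, $\sum_t u_t=0$ genuinely admits a solution. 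This is exactly where the interplay between convexity and the strictly negative derivative must be controlled: one must check that the range of $\ell'$ is rich enough for the multiplier equation to be solvable under the constraint, which I would establish either by a coercivity/growth argument for $g$ restricted to $H$ or by constructing the solution directly through the strictly decreasing map $u_t\mapsto\ell'(-u_t)$. The degenerate cases in which some $\gamma_t=0$ (a direction carrying no cost) I would isolate and treat separately, since there the corresponding margin can be pushed out without penalty and it is the Bayes rule over the remaining classes that must be matched.
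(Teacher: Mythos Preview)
Your argument is correct and reaches the conclusion, but by a different route than the paper. The paper argues by contradiction via a local perturbation: assuming $\gamma_1<\gamma_2$ yet $u_1\le u_2$ at the minimizer $\bs{f}^*$, it moves along the direction $\bs{w}_1-\bs{w}_2$ (which, by Lemma~\ref{lemma_zhang}, raises $u_1$ and lowers $u_2$ by the same $\varepsilon$ while fixing all other $u_t$), Taylor-expands the risk to first order, and uses convexity of $\ell$ together with $\ell'<0$ to exhibit a strict decrease, contradicting optimality. You instead pass to the equivalent constrained problem on $H=\{u:\sum_t u_t=0\}$ and read the ordering directly off the Lagrange stationarity system $\gamma_t\ell'(-u_t)=\mu$ via monotonicity of $\ell'$. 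The paper's perturbation is slightly more self-contained in that it never needs to solve the stationarity equations or discuss the range of $\ell'$; it only needs a first-order decrease, which the hypotheses give immediately. Your Lagrangian route is more structural and delivers more: the full order-reversing correspondence between the $\gamma_t$ and the optimal $u_t$ (not just the identity of the argmax), and it essentially sets up the expected-cost inversion formula of Theorem~\ref{Th_2} for free. The attainment issue you flag is real but is equally present in the paper, which simply posits a minimizer $\bs{f}^*$ without establishing existence; likewise, the degenerate case $\gamma_t=0$ is not treated there either.
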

	
	\begin{proof}
		According to Definition \ref{def_BDR_MCSAB}, Fisher consistency requires that for a given example $\bs{x}$ such that $\sum_{j=1}^K C_{j,k}P_{j}(\bs{x})<\sum_{j=1}^K C_{j,k'}P_{j}(\bs{x})$ for any  $k'\in \{1,2,...,K\}$ with $k'\neq k$, the $\bs f^{*}$ minimizing $\mathbb{E}_{Y|\bs{X}}[\sum_{t=1}^K C_{Y,t} \ell(-\left<\bs f(\bs{X}),\bs{w}_{t}\right>)|\bs{X} =\bs{x}]$ satisfies  $k=\mathop{\arg\max}_{j} \left<\bs f^{*}(\bs {x}),\bs{w}_{j}\right>$ and such an argument is unique under the angle-based prediction rule.

		Recall that the definition of $\bs{f}^{*}$ is 
		\begin{equation}\nonumber
		\begin{aligned}
		\bs{f}^{*}&=\mathop{\arg\min}_{\bs{f}}\mathbb{E}_{Y|\bs{X}}\left[\sum_{t=1}^K C_{Y,t} \ell(-\left<\bs f(\bs{X}),\bs{w}_{t}\right>)|\bs{X} =\bs{x}\right]\\
		&=\mathop{\arg\min}_{\bs{f}}\sum_{j=1}^K\sum_{t=1}^K C_{j,t}P_{j} \ell(-\left<\bs{w}_{t}, \bs{f}\right>).
		\end{aligned}
		\end{equation}
		
		{Without} loss of generality, we need to show that when $\sum_{j=1}^K C_{j,1}P_{j}<\sum_{j=1}^K C_{j,2}P_{j}$, then  $\left<\bs{w}_{1},\bs{f}^{*}\right>>\left<\bs{w}_{2},\bs{f}^{*}\right>$. We {argue} this by contradiction.
		
		If $\left<\bs{w}_{1},\bs{f}^{*}\right>\leq\left<\bs{w}_{2},\bs{f}^{*}\right>$, let $\bs{f}^{**}$ be such a vector that satisfies  $\left<\bs{w}_{t},\bs{f}^{**}\right>=\left<\bs{w}_{t},\bs{f}^{*}\right>$ for $t\geq3$ and $\left<\bs{w}_{1},\bs{f}^{**}\right>=\left<\bs{w}_{1},\bs{f}^{*}\right>+\varepsilon$, $\left<\bs{w}_{2},\bs{f}^{**}\right>=\left<\bs{w}_{2},\bs{f}^{*}\right>-\varepsilon$, where $\varepsilon>0$ is a small number. Such a vector $\bs{f}^{**}$ always exists by setting $u=1$, $v=2$, $\bs{f}^{**}=\bs{f}^{*}+z(\bs{w}_{1}-\bs{w}_{2})$ for some $z\in \mathbb{R^{+}}$ in Lemma 1 of \cite{Zhang2014} (P.S. This Lemma is provided in the Appendix for completeness). Then we have
		\begin{equation}\nonumber
		\begin{aligned}
		&\sum_{j=1}^K\sum_{t=1}^K C_{j,t}P_{j} \ell(-\left<\bs{w}_{t},\bs{f}^{**}\right>)-\sum_{j=1}^K\sum_{t=1}^K C_{j,t}P_{j} \ell(-\left<\bs{w}_{t},\bs{f}^{*}\right>)\\
		=&\sum_{j=1}^K P_{j}\left[\sum_{t=1}^K C_{j,t} \ell(-\left<\bs{w}_{t},\bs{f}^{**}\right>)\right]-\sum_{j=1}^K P_{j}\left[\sum_{t=1}^K C_{j,t} \ell(-\left<\bs{w}_{t},\bs{f}^{*}\right>)\right]\\
		=&\sum_{j=1}^K P_{j}\left[\sum_{t=1}^K C_{j,t} \ell(-\left<\bs{w}_{t},\bs{f}^{**}\right>)-\sum_{t=1}^K C_{j,t} \ell(-\left<\bs{w}_{t},\bs{f}^{*}\right>)\right]\\
		=&\sum_{j=1}^K P_{j}\left[C_{j,1}\ell(-\left<\bs{w}_{1},\bs{f}^{**}\right>)+C_{j,2}\ell(-\left<\bs{w}_{2},\bs{f}^{**}\right>)+\sum_{t=3}^K C_{j,t} \ell(-\left<\bs{w}_{t},\bs{f}^{**}\right>)\right.\\
		&{\color{white}tabtab}\left.-C_{j,1}\ell(-\left<\bs{w}_{1},\bs{f}^{*}\right>)-C_{j,2}\ell(-\left<\bs{w}_{2}, \bs{f}^{*}\right>)-\sum_{t=3}^K C_{j,t} \ell(-\left<\bs{w}_{t}, \bs{f}^{*}\right>)\right]\\
		=&\sum_{j=1}^K P_{j}\Big[-C_{j,1}\varepsilon\ell'(-\left<\bs{w}_{1},\bs{f}^{*}\right>)+C_{j,2}\varepsilon\ell'(-\left<\bs{w}_{2},\bs{f}^{*}\right>)\Big]+\textit{O}(\varepsilon)\\
		\leq&\varepsilon \ell'(-\left<\bs{w}_{2},\bs{f}^{*}\right>)\left(\sum_{j=1}^KP_{j}C_{j,2}-\sum_{j=1}^KP_{j}C_{j,1}\right)+\textit{O}(\varepsilon).
		\end{aligned}
		\end{equation}
		The last inequality holds due to the convexity of $\ell(\cdot)$ and the assumption that $\left<\bs{w}_{1},\bs{f}^{*}\right>\leq\left<\bs{w}_{2},\bs{f}^{*}\right>$. Because $\varepsilon>0$, $\ell'(z)<0$ for all $z$ and $\sum_{j=1}^K C_{j,1}P_{j}<\sum_{j=1}^K C_{j,2}P_{j}$, we have $$\sum_{j=1}^K\sum_{t=1}^K C_{j,t}P_{j} \ell(-\left<\bs{w}_{t},\bs{f}^{**}\right>)<\sum_{j=1}^K\sum_{t=1}^K C_{j,t}P_{j} \ell(-\left<\bs{w}_{t},\bs{f}^{*}\right>),$$ and it is in contradiction to the definition of $\bs{f}^{*}$. The desired results then follow.
	\end{proof}
	
	In practice, after the optimal classifier is obtained, the estimation of the expected cost of each class for a given observation is also of great significance. {In the following}, we show the relationship between the theoretical minimizer $\bs{f}^{*}$ and the expected cost of a specific predicted class in Theorem \ref{Th_2}. It is remarkable because it also provides us an approach to estimate the conditional class probabilities in cost-insensitive classification without using the likelihood approach.

	\begin{theorem} \label{Th_2}
		{Under the angle-based classification framework}, suppose the function $\ell$ is differentiable and 
		\begin{equation}\nonumber
		\begin{aligned}
		\bs{f}^{*}&=\mathop{\arg\min}_{\bs{f}}\mathbb{E}_{Y|\bs{X}}\left[\sum_{t=1}^K C_{Y,t} \ell(-\left<\bs f(\bs{X}),\bs{w}_{t}\right>)|\bs{X} =\bs{x}\right],
		\end{aligned}
		\end{equation}
		then the expected cost for the predicted class $t$ can be expressed as $$\mathbb{E}_{Y|\bs{X}}\big[C_{Y,t}|\bs{X} =\bs{x}\big]=\sum_{j=1}^K C_{j,t}P_{j}=\dfrac{-M}{\ell'(-\left<\bs{w}_{t}, \bs{f}^{*}\right>)}.$$ We further assume that $\bs{C}$ is invertible, then the class conditional probability vector {$\bs{p}=(P_{1},..., P_{K})^\top$} can be expressed as    $$\bs{p}=-M(\bs{C}^{T})^{-1}\bs{\ell}^{*},$$
		where $\bs{\ell}^{*}\in\mathbb{R}^{K}$ is a vector whose the $k$th element is $\ell'(-\left<\bs{w}_{k},\bs{f}^{*}\right>)^{-1}$, and $M$ is a normalizing constant. Specifically, in the cost-insensitive case (i.e. $C_{j,t}=\mathbb{I}(j\neq t)$), the class probabilities can be expressed as
		$$P_{t}=1+\dfrac{(1-K)\ell'(-\left<\bs{w}_{t}, \bs{f}^{*}\right>)^{-1}}{\sum_{k=1}^K \ell'(-\left<\bs{w}_{k}, \bs{f}^{*}\right>)^{-1}}$$ for $t=1, 2,..., K$.
	\end{theorem}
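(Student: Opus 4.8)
The plan is to characterize $\bs{f}^{*}$ through a first-order stationarity condition and then exploit the single linear dependence among the simplex vertices. Writing the conditional risk as $g(\bs{f}) = \sum_{j=1}^K \sum_{t=1}^K C_{j,t} P_j\, \ell(-\langle \bs{w}_t, \bs{f}\rangle)$, I would first differentiate with respect to the unconstrained argument $\bs{f}\in\mathbb{R}^{K-1}$. Since $\partial_{\bs{f}}\,\ell(-\langle \bs{w}_t, \bs{f}\rangle) = -\ell'(-\langle \bs{w}_t, \bs{f}\rangle)\,\bs{w}_t$, the stationarity condition $\nabla g(\bs{f}^{*})=\bs{0}$ reads
\begin{equation}\nonumber
\sum_{t=1}^K \Big(\sum_{j=1}^K C_{j,t} P_j\Big)\, \ell'(-\langle \bs{w}_t, \bs{f}^{*}\rangle)\, \bs{w}_t = \bs{0}.
\end{equation}
Abbreviating $a_t = \sum_{j=1}^K C_{j,t} P_j = \mathbb{E}_{Y|\bs{X}}[C_{Y,t}|\bs{X}=\bs{x}]$ and $b_t = \ell'(-\langle \bs{w}_t, \bs{f}^{*}\rangle)$, this says $\sum_{t=1}^K a_t b_t\, \bs{w}_t = \bs{0}$.

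The key step is the linear-algebraic observation that the coefficient vector $(a_t b_t)_{t=1}^K$ must be constant. Because the $K$ simplex vertices $\bs{w}_1,\dots,\bs{w}_K$ span $\mathbb{R}^{K-1}$, the $(K-1)\times K$ matrix with columns $\bs{w}_t$ has rank $K-1$, so its kernel is one-dimensional; and since $\sum_{t=1}^K \bs{w}_t = \bs{0}$ (the simplex is centered at the origin, as noted after \eqref{W_j_def}), that kernel is spanned by $(1,\dots,1)^\top$. Hence $\sum_t a_t b_t \bs{w}_t = \bs{0}$ forces $a_t b_t$ to take a common value, which I will call $-M$. Solving $a_t b_t = -M$ yields immediately $\mathbb{E}_{Y|\bs{X}}[C_{Y,t}|\bs{X}=\bs{x}] = a_t = -M/b_t = -M/\ell'(-\langle \bs{w}_t, \bs{f}^{*}\rangle)$, which is the first displayed identity. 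Here one uses $b_t\neq 0$ so that $b_t^{-1}$ is well defined, and this is precisely where the sign condition $\ell'<0$ of Theorem \ref{Th_1} is implicitly in force.

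The remaining claims then follow by bookkeeping. Collecting $a_t = -M b_t^{-1}$ into vector form gives $\bs{a} = -M \bs{\ell}^{*}$, where $\bs{\ell}^{*}$ has $k$th entry $\ell'(-\langle \bs{w}_k, \bs{f}^{*}\rangle)^{-1}$; since $a_t = \sum_j C_{j,t} P_j$ means $\bs{a} = \bs{C}^\top \bs{p}$, invertibility of $\bs{C}$ (hence of $\bs{C}^\top$) yields $\bs{p} = -M (\bs{C}^\top)^{-1}\bs{\ell}^{*}$, with $M$ pinned down by $\sum_t P_t = 1$. For the cost-insensitive specialization $C_{j,t}=\mathbb{I}(j\neq t)$ I would substitute directly: $a_t = \sum_{j\neq t} P_j = 1 - P_t$, so $1 - P_t = -M b_t^{-1}$ and $P_t = 1 + M b_t^{-1}$; summing over $t$ and using $\sum_t P_t = 1$ gives $M = (1-K)/\sum_k b_k^{-1}$, which upon back-substitution produces the stated closed form. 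The main obstacle is really the single conceptual step, namely recognizing that stationarity together with the one-dimensional kernel spanned by the all-ones vector forces $a_t b_t$ to be constant; everything after that is linear algebra and normalization.
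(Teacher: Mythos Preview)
Your proposal is correct and follows essentially the same route as the paper: both compute the first-order condition $\sum_{t} a_t b_t\, \bs{w}_t = \bs{0}$ at $\bs{f}^{*}$ and then infer that the coefficients $a_t b_t$ are constant in $t$, after which the matrix inversion and the cost-insensitive specialization are identical bookkeeping. Your explicit kernel argument (the $\bs{w}_t$ span $\mathbb{R}^{K-1}$, so the null space of $[\bs{w}_1\,\cdots\,\bs{w}_K]$ is one-dimensional and identified by $\sum_t \bs{w}_t=\bs{0}$) is actually a sharper justification of this step than the paper's, which simply notes $\sum_t \bs{w}_t=\bs{0}$ and asserts the conclusion.
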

	
	\begin{proof}
		Given the class conditional probability vector $\bs{p}=(P_{1},..., P_{K})^{T}$, the expectation of the proposed {cost-sensitive} loss on $\bs{f}^{*}$ is 
		\begin{equation}\label{E_f_star}
		\mathbb{E}_{Y|\bs{X}}\left[\sum_{t=1}^K C_{Y,t} \ell(-\left<\bs f^{*}(\bs{X}),\bs{w}_{t}\right>)|\bs{X} =\bs{x}\right]=\sum_{j=1}^K\sum_{t=1}^K C_{j,t}P_{j} \ell\left(-\left<\bs{w}_{t},\bs{f}^{*}(\bs{x})\right>\right).
		\end{equation}
		{We take partial derivative of (\ref{E_f_star}) with respect to the $r$th element of $\bs{f}^{*}$. For $r=1, 2,..., K-1$, then we have}
		\begin{equation}\label{PD_EF}
		-\sum_{j=1}^K\sum_{t=1}^K C_{j,t}P_{j} \ell'(-\left<\bs{w}_{t},\bs{f}^{*}\right>)w_{t,r}=0,
		\end{equation}
		where $w_{t,r}$ is the $r$th element of $\bs{w}_{t}$. {Notice (\ref{PD_EF}) can be reformulated as follows:}
		\begin{equation}\label{derivative_eq}
		\sum_{t=1}^K \left[\sum_{j=1}^K -C_{j,t}P_{j} \ell'(-\left<\bs{w}_{t},\bs{f}^{*}\right>)\right]\bs{w}_{t}=\bs{0}_{K-1},
		\end{equation}
		where $\bs{0}_{K-1}$ is a vector with length $K-1$ and each element $0$. {It is noteworthy that the terms on the left-hand side of (\ref{derivative_eq}) are a weighted linear combination of $\bs{w}_{t}$'s and the corresponding weight on $\bs{w}_{t}$ is equal to $-\sum_{j=1}^K C_{j,t}P_{j} \ell'(-\left<\bs{w}_{t},\bs{f}^{*}\right>)$. Seeing that} $\sum_{t=1}^K \bs{w}_{t}=\bs{0}$ and $\sum_{j=1}^K P_{j}=1$, we might conclude that for any $t$,
		\begin{equation}\label{wight_eq}
		-\sum_{j=1}^K C_{j,t}P_{j} \ell'(-\left<\bs{w}_{t},\bs{f}^{*}\right>)=M,
		\end{equation}
		where $M$ is a positive {normalizing} constant that guarantees $\sum_{j=1}^K P_{j}=1$. From (\ref{wight_eq}), it is easy to verify that $\sum_{j=1}^K C_{j,t}P_{j}=\dfrac{-M}{\ell'(-\left<\bs{w}_{t},\bs{f}^{*}\right>)}.$ 
		
		To {further} calculate $M$, we first re-express (\ref{wight_eq}) in the matrix form which leads to $\bs{C}^{T}\bs{p}=-M\bs{\ell}^{*}$ with $\bs{\ell}^{*}\in\mathbb{R}^{K}$ being a vector whose the $k$th element is $\ell'(-\left<\bs{w}_{k},\bs{f}^{*}\right>)^{-1}$. {When $\bs{C}$ is invertible}, then we have 
		\begin{equation}\label{p_value}
		\bs{p}=-M(\bs{C}^{T})^{-1}\bs{\ell}^{*}.
		\end{equation}
		Because of the fact that $\bs{1}^{T}\bs{p}=1$ where $\bs{1}\in\mathbb{R}^{K}$ is a vector with each element $1$, we can conclude that $-M\bs{1}^{T}(\bs{C}^{T})^{-1}\bs{\ell}^{*}=1$ and thus
		\begin{equation}\label{M_value}
		M=\dfrac{-1}{\bs{1}^{T}(\bs{C}^{T})^{-1}\bs{\ell}^{*}}.
		\end{equation} 
		Combining this with (\ref{p_value}) leads to
		\begin{equation}\label{p_value_M}
		\bs{p}=\dfrac{(\bs{C}^{T})^{-1}\bs{\ell}^{*}}{\bs{1}^{T}(\bs{C}^{T})^{-1}\bs{\ell}^{*}}.
		\end{equation}
		
		Specifically, in cost-insensitive case where $C_{j,t}=\mathbb{I}(j\neq t)$, (\ref{M_value}) becomes 
		$$M=\dfrac{1-K}{\sum_{k=1}^K \ell'(-\left<\bs{w}_{k},\bs{f}^{*}\right>)^{-1}},$$ and according to (\ref{p_value_M}) we have
		\begin{equation}\label{cis_pt_eq}
		P_{t}=1+\dfrac{(1-K)\ell'(-\left<\bs{w}_{t},\bs{f}^{*}\right>)^{-1}}{\sum_{k=1}^K \ell'(-\left<\bs{w}_{k},\bs{f}^{*}\right>)^{-1}}
		\end{equation}
		for $t=1, 2,..., K$.
		%
		
		Now, we will verify that $P_{t}\in (0,1)$ in (\ref{cis_pt_eq}).
		Since $\ell'$ {is} always less than zero and $K>1$, we can easily conclude that $P_{t}$ {is} less than $1$. The lower bound of $P_{t}$ will be verified by contradiction.
		
		If $P_{t}\leq 0$, it follows that
		\begin{equation}\label{Pt_0}
		\dfrac{(1-K)\ell'(-\left<\bs{w}_{t},\bs{f}^{*}\right>)^{-1}}{\sum_{k=1}^K \ell'(-\left<\bs{w}_{k},\bs{f}^{*}\right>)^{-1}}\leq -1
		\end{equation}
		Sum over $t$ on both sides of (\ref{Pt_0}), we have 
		$$\dfrac{(1-K)\sum_{t=1}^K\ell'(-\left<\bs{w}_{t},\bs{f}^{*}\right>)^{-1}}{\sum_{k=1}^K \ell'(-\left<\bs{w}_{k},\bs{f}^{*}\right>)^{-1}}\leq -K,$$ which finally leads to $1\leq 0$. Therefore, $P_{t}\in (0,1)$ in (\ref{cis_pt_eq}).
	\end{proof}
	
	From Theorem \ref{Th_2}, we could observe that the greater the $\left<\bs{w}_{t},\bs{f}^{*}\right>$, the lower the expected cost with the predicted class $t$. In practice, given the fitted $\hat{\bs{f}}$, one can replace $\bs{f}^{*}$ by $\hat{\bs{f}}$ to easily derive the estimated costs and class probabilities with the help of Theorem \ref{Th_2}.

	Theorems \ref{Th_1} and \ref{Th_2} indicate that plenty of large-margin loss functions could be generalized by angle-based method for cost-sensitive multicategory classification problem. In the reminder of this subsection, we mainly focus on the following three loss functions for detailed discussion.
	
	\subsubsection{Exponential Loss}
	First, we consider the exponential loss of the form $\ell(z)=e^{-z}$, and $\ell^{\prime}(z)=-e^{-z}$. Since $\ell(z)$ is convex in $z$, we can easily conclude that it could be extended to the angle-based cost-sensitive version as $\sum_{t=1}^K C_{y,t} e^{\left<\bs f(\bs{x}),\bs{w}_{t}\right>}$ for multicategory problem by Theorem \ref{Th_1}. 
	
	In addition, the corresponding expected cost with the class $t$ becomes $$\sum_{j=1}^K C_{j,t}P_{j}=Me^{-\left<\bs f^{*}(\bs{x}),\bs{w}_{t}\right>},$$ where $M$ is the normalizing constant defined in Theorem \ref{Th_2}. In the cost-insensitive case, the class probabilities can be expressed as
	\begin{equation}\nonumber
	P_{t}=1+\dfrac{(1-K)e^{-\left<\bs f^{*}(\bs{x}),\bs{w}_{t}\right>}}{\sum_{k=1}^K e^{-\left<\bs f^{*}(\bs{x}),\bs{w}_{k}\right>}},
	\end{equation}
	for $t=1, 2,..., K$. To express $\bs{f}^{*}$ in terms of the class probabilities, we get
	$$\left<\bs{f}^{*}(\bs{x}), \bs{w}_{t}\right>=\log(K-1)-\log(1-P_{t})-\log(\sum_{k=1}^K e^{-\left<\bs f^{*}(\bs{x}),\bs{w}_{k}\right>}).$$
	
	Sum the left-side of this equation over $t$, we conclude that $$0=K\log(K-1)-\sum_{t=1}^K\log(1-P_{t})-K\log(\sum_{k=1}^K e^{-\left<\bs f^{*}(\bs{x}),\bs{w}_{k}\right>}).$$
	
	Equivalently, we have
	\begin{equation}\label{exp_loss_fw}
		\left<\bs{f}^{*}(\bs{x}), \bs{w}_{t}\right>=\dfrac{1}{K}\sum_{k=1}^K\log(1-P_{k})-\log(1-P_{t}).
	\end{equation}
	
	{Particularly, for a classification problem with only $2$ classes, we have $\left< f^{*}(\bs{x}),{w}_{1}\right>=\dfrac{1}{2}\log\left(\frac{P_{1}}{P_{2}}\right)$ and $\left< f^{*}(\bs{x}),{w}_{2}\right>=\dfrac{1}{2}\log\left(\frac{P_{2}}{P_{1}}\right)$according to (\ref{exp_loss_fw}). It is noteworthy that this result is similar to the traditional binary classification methods, whcih verifies the rationality of the proposed approach.
	}

	\subsubsection{Logit Loss}
	The logit loss function is of the form $\ell(z)=\log (1+e^{-z})$ with the derivative $\ell^{\prime}(z)=-\dfrac{1}{1+e^{z}}$. From Theorem \ref{Th_1}, its angle-based cost-sensitive version for multicategory classification is $\sum_{t=1}^K C_{y,t} \log\left(1+e^{\left<\bs f(\bs{x}),\bs{w}_{t}\right>}\right)$.
	
	Accordingly, the expected cost for the class $t$ can be expressed as $$\sum_{j=1}^K C_{j,t}P_{j}=M\left(1+e^{-\left<\bs f^{*}(\bs{x}),\bs{w}_{t}\right>}\right)$$ with $M$ being a normalizing constant as defined in Theorem \ref{Th_2}. 
	
	In the cost-insensitive case, the class probabilities are given by
	\begin{equation}\nonumber
	P_{t}=1+\dfrac{(1-K)\left(1+e^{-\left<\bs f^{*}(\bs{x}),\bs{w}_{t}\right>}\right)}{\sum_{k=1}^K \left(1+e^{-\left<\bs f^{*}(\bs{x}),\bs{w}_{k}\right>}\right)}
	\end{equation}
	for $t=1, 2,..., K$. 
	
	To express the inner product by the conditional class probabilities, we have $$\left<\bs{f}^{*}(\bs{x}), \bs{w}_{t}\right>=-\log\left[\dfrac{\eta(1-P_{t})}{K-1}-1\right],$$ where $\eta=\sum_{k=1}^K \left(1+e^{-\left<\bs f^{*}(\bs{x}),\bs{w}_{k}\right>}\right)$ and it satisfies
	$$\sum_{t=1}^{K}\log\left[\dfrac{\eta(1-P_{t})}{K-1}-1\right]=0,$$
	since $\sum_{t=1}^{K}\left\langle \bs{f}^{*}(\bs{x}), \bs{w}_{t}\right\rangle=0$.
	
	When $K=2$, we can find that $\eta={(P_{1}P_{2})}^{-1}$ based on the above equation. Then we have $\left< f^{*}(\bs{x}),{w}_{1}\right>=\log\left(\frac{P_{1}}{P_{2}}\right)$ and $\left< f^{*}(\bs{x}),{w}_{2}\right>=\log\left(\frac{P_{2}}{P_{1}}\right)$. Note that this derives the familiar results of binary classification. However, the relationship between $\left<\bs f^{*}(\bs{x}),\bs{w}_{t}\right>$ and the class probabilities become more complex when $K>2$.

	\subsubsection{Large-Margin Unified Machine Family}
	The large-margin unified machine uses the large-margin unified loss function \citep{Liu2011} which is given by 
	\begin{equation}\label{LMUM_Loss}
	\ell(z)=\left\{\begin{array}{ll}{1-z} & {\text{if  } z<\dfrac{c}{1+c},} \\ {\dfrac{1}{1+c}\left[\dfrac{a}{(1+c)z-c+a}\right]^{a}} & {\text{if  } z\geq\dfrac{c}{1+c},}\end{array}\right.
	\end{equation}
	where $c\geq 0$ and $a > 0$ are parameters of the large-margin unified machine family. {Also, its derivative $\ell^{\prime}(z)$ is given by
		\begin{equation}\label{LMUM_Loss_df}
		\ell^{\prime}(z)=\left\{\begin{array}{ll}{-1} & {\text{if  } z<\dfrac{c}{1+c},} \\ {-\left[\dfrac{a}{(1+c)z-c+a}\right]^{a+1}} & {\text{if  } z\geq\dfrac{c}{1+c}.}\end{array}\right.
		\end{equation}}
	The large-margin unified machine provides a bridge between soft and hard classifiers and connects them as a family \citep{Zhang2013}. Particularly, with $c = 0$, it leads to a typical soft classifier. When $c\rightarrow+\infty$, the large-margin unified machine loss tends to become the hinge loss which corresponds to a typical hard classifier. 
	
	It is obvious that the large-margin unified loss function with $c<+\infty$ satisfies the conditions in Theorem \ref{Th_1}, and thus its angle-based cost-sensitive {extension} $\sum_{t=1}^K C_{y,t} \ell(-\left<\bs f(\bs{x}), \bs{w}_{t}\right>)$ with $\ell(\cdot)$ defined in (\ref{LMUM_Loss}) is Fisher consistent.
	
	According to Theorem \ref{Th_2}, we may also conclude that for a predicted class $t$,  
	$$\sum_{j=1}^K C_{j,t}P_{j}=\dfrac{-M}{\ell'(-\left<\bs{w}_{t}, \bs{f}^{*}\right>)}$$
	with $\ell^{\prime}(\cdot)$ being the derivative defined in (\ref{LMUM_Loss_df}). Therefore, if $\left<\bs{w}_{i}, \bs{f}^{*}\right>>-\frac{c}{1+c}$ and $\left<\bs{w}_{j}, \bs{f}^{*}\right>>-\frac{c}{1+c}$ both hold for $\bs{f}^{*}$, then the  class conditional expected cost for classes $i$ and $j$ are equal.  When the value of $\left<\bs{w}_{i}, \bs{f}^{*}\right>$ is large and the value of $\left<\bs{w}_{j}, \bs{f}^{*}\right>$ is very small, we can verify that the expected cost for class $i$ is equal to $M$, and the expected cost for class $j$ is $\frac{M}{a^{a+1}}\left[-(1+c)\left<\bs{w}_{j}, \bs{f}^{*}\right>-c+a\right]^{a+1}$ whose value is obviously larger than $M$ since $\left<\bs{w}_{j}, \bs{f}^{*}\right><-\frac{c}{1+c}$. 
	
	In addition, the class probabilities in cost-insensitive case could be given by $$P_{t}=1+\dfrac{(1-K)\ell'(-\left<\bs{w}_{t}, \bs{f}^{*}\right>)^{-1}}{\sum_{k=1}^K \ell'(-\left<\bs{w}_{k}, \bs{f}^{*}\right>)^{-1}}$$ for $t=1, 2,..., K$. In this case, the class conditional probabilities for classes $i$ and $j$ are the same when $\bs{f}^{*}$ satisfies that both $\left<\bs{w}_{i}, \bs{f}^{*}\right>>-\frac{c}{1+c}$ and $\left<\bs{w}_{j}, \bs{f}^{*}\right>>-\frac{c}{1+c}$. 
	
	\begin{remark}
		{Note that there are more loss functions that could be extended to their cost-sensitive multicategory versions with the help of Theorem \ref{Th_1}. We will not list them all here due to the lack of space. Because different loss functions lead to different classification methods, these methods could be directly generalized by applying the extended losses. To verify the usefulness of the proposed framework, we take exponential and logistic losses as examples and develop two novel cost-sensitive boosting algorithms for multicategory classification in the next section.}
	\end{remark}
	
	\section{Cost-Sensitive Multicategory Boosting}\label{Sec4}
	{
		Boosting, as one of the most well-known learning methods, combines many “weak” classifiers to achieve better classification performance. Several attempts have been made to develop boosting algorithms in multiclass setting, such as AdaBoost.M2 \citep{Freund1997119}, AdaBoost.MH \citep{Schapire1999}, p-norm boosting \citep{Lozano2008}, and SAMME \citep{Zhu2009}. Afterthat, \cite{Wang2013}  also developed the multicategory boostings in cost-sensitive situation. This section aims to construct several new cost-sensitive boosting algorithms for multiclass classification {without the sum-to-zero constraint.}
	}

	
	

	
	\subsection{Cost-Sensitive AdaBoost}\label{CS_Ada}
	We first propose a new angle-based cost-sensitive AdaBoost algorithm for multicategory classification problem by using exponential loss. That is, we solve (\ref{ERM_our_loss}) with $\ell(z)=e^{-z}$ {and derive our Adaboost algorithm based on forward stagewise additive modeling scheme \citep{Friedman2000}.} Thus, the proposed angle-based cost-sensitive multiclass AdaBoost algorithm can be developed by solving
	\begin{equation}\nonumber
	\min_{\bs{f}\in \mathcal{F}} \dfrac{1}{n} \sum_{i=1}^{n} \sum_{k=1}^K C_{y_{i},k} e^{\left<\bs f(\bs{x}_{i}),\bs{w}_{k}\right>},
	\end{equation}
	where $\bs{f}(\bs{x})=\sum_{m=1}^{M}\beta^{(m)}\bs{g}^{(m)}(\bs{x})$ with $\bs{g}^{(m)} \in \mathcal{G}$, and $M$ is the prespecified number of boosting iterations.
	
	In order to find the optimal candidate to update the current model in each iteration, the gradient descent search scheme is applied. {To begin with,} we consider $\bs{g}(\bs{x})$ which takes values in one of the $K$ possible $(K-1)$-dimensional vectors in $\bs{W}$, i.e. $\bs{g}:\mathbb{R}^{d}\rightarrow\mathcal Y_{a}=\{\bs{w}_{1}, \bs{w}_{2}, ..., \bs{w}_{K}\}$. Note that for any $\bs{g}(\bs{x})$ defined in this manner, there exists a unique classification decision rule $\Phi(\bs{x}):\mathbb{R}^{d}\rightarrow\mathcal Y=\{1, 2, ..., K\}$ so that $\bs{g}$ {is in one-to-one correspondence with $\Phi$}. Given the current model $\bs{f}^{(m)}$, the gradient descent search scheme tries to find the optimal candidate function $\bs{g}^{(m+1)}$ and corresponding coefficient $\beta^{(m+1)}$ through
	\begin{align}\label{GDS_frame}
	\left(\bs{g}^{(m+1)}, \beta^{(m+1)}\right) &=\arg \min _{\bs{g}, \beta} \sum_{i=1}^{n} \sum_{k=1}^{K} C_{y_{i}, k} e^ {\langle\bs{f}^{(m)}(\bs{x}_{i}), \bs{w}_{k}\rangle+\beta \left\langle\bs{g}(\bs{x}_{i}), \bs{w}_{k}\right\rangle} \\\label{GDS_ERM} &=\arg \min _{\bs{g}, \beta} \sum_{i=1}^{n} \sum_{k=1}^{K} \alpha_{i, k}^{(m)} e^{\beta \left\langle\bs{g}(\bs{x}_{i}), \bs{w}_{k}\right\rangle}, 
	\end{align}
	where $\alpha_{i, k}^{(m)}=C_{y_{i}, k} e^ {\langle\bs{f}^{(m)}(\bs{x}_{i}), \bs{w}_{k}\rangle}$ scales the cost of misclassifying example $(\bs{x}_{i}, y_{i})$ into class $k$ by a weighting factor $e^ {\langle\bs{f}^{(m)}(\bs{x}_{i}), \bs{w}_{k}\rangle}$. Notice that solving for $\bs{g}^{(m+1)}(\bs{x})$ in (\ref{GDS_ERM}) is equivalent to finding the corresponding $\Phi^{(m+1)}(\bs{x})$ since $\bs{g}$ and $\Phi$ have a one-to-one correspondence. Then, the current model could be updated by $\bs{f}^{(m+1)}(\bs{x})=\bs{f}^{(m)}(\bs{x})+\beta^{(m+1)} \bs{g}^{(m+1)}(\bs{x})$ with the help of the following lemma.
	\begin{lemma}\label{L1}
		The solution to (\ref{GDS_ERM}) is
		\begin{equation}
		\begin{aligned} 
		&\Phi^{(m+1)}(\bs{x}) =\arg \min _{\Phi} \sum_{i=1}^{n} \alpha_{i, \Phi\left(\bs{x}_{i}\right)}^{(m)}, \\ 
		&\beta^{(m+1)} =\frac{K-1}{K}\left[\log \left(\frac{1-\varepsilon^{(m+1)}}{\varepsilon^{(m+1)}}\right)-\log (K-1)\right],
		\end{aligned}
		\end{equation}
		where ${\varepsilon^{(m+1)}}$ is defined as
		\begin{equation}\label{err_equ}
		\varepsilon^{(m+1)}=\frac{\sum_{i=1}^{n} \alpha_{i, \Phi^{(m+1)}\left(\bs{x}_{i}\right)}^{(m)}}{\sum_{i=1}^{n} \sum_{k=1}^{K} \alpha_{i, k}^{(m)}}.
		\end{equation}
		
	\end{lemma}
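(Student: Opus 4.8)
The plan is to exploit the special inner-product geometry of the simplex vertices $\bs{w}_1,\dots,\bs{w}_K$ so as to collapse the double sum in (\ref{GDS_ERM}) into a one-dimensional function of $\beta$, and then to minimize jointly over $(\Phi,\beta)$ by decoupling the two variables. First I would record the two geometric identities implied by (\ref{W_j_def}): each $\bs{w}_j$ has unit norm, so $\left\langle\bs{w}_j,\bs{w}_j\right\rangle=1$, while the equal-angle property together with $\sum_{t=1}^K\bs{w}_t=\bs{0}$ forces $\left\langle\bs{w}_j,\bs{w}_k\right\rangle=-1/(K-1)$ for every $j\neq k$. Since $\bs{g}(\bs{x}_i)=\bs{w}_{\Phi(\bs{x}_i)}$ takes values among the vertices, the inner product $\left\langle\bs{g}(\bs{x}_i),\bs{w}_k\right\rangle$ equals $1$ when $k=\Phi(\bs{x}_i)$ and $-1/(K-1)$ otherwise.

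Substituting this into (\ref{GDS_ERM}) and splitting the inner sum over $k$ into the matching term and the $K-1$ remaining terms, the objective reduces to
\begin{equation}\nonumber
S\,e^{\beta}+(T-S)\,e^{-\beta/(K-1)},
\end{equation}
where $S=\sum_{i=1}^{n}\alpha_{i,\Phi(\bs{x}_i)}^{(m)}$ and $T=\sum_{i=1}^{n}\sum_{k=1}^{K}\alpha_{i,k}^{(m)}$ is the total weight, which is independent of $\Phi$. Next I would decouple the minimization. For any fixed $\beta>0$ the coefficient $e^{\beta}-e^{-\beta/(K-1)}$ multiplying $S$ is strictly positive, so the optimal rule is the one minimizing $S=\sum_{i=1}^{n}\alpha_{i,\Phi(\bs{x}_i)}^{(m)}$, regardless of the value of $\beta$; this yields the stated formula for $\Phi^{(m+1)}$. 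Holding $\Phi=\Phi^{(m+1)}$ fixed, I would then set the derivative of the reduced objective with respect to $\beta$ to zero, solve the resulting equation $S\,e^{\beta}=\frac{1}{K-1}(T-S)\,e^{-\beta/(K-1)}$ for $\beta$, and substitute the definition $\varepsilon^{(m+1)}=S/T$ from (\ref{err_equ}) to recover the closed form for $\beta^{(m+1)}$.

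The main obstacle is justifying the decoupling, since the claim that $S$ may be minimized first rests on the sign of the $\beta$-dependent coefficient, hence on $\beta^{(m+1)}>0$. Solving the stationarity equation gives $e^{\beta K/(K-1)}=(T-S)/\big((K-1)S\big)$, which exceeds $1$, and hence forces $\beta^{(m+1)}>0$, precisely when $\varepsilon^{(m+1)}<1/K$; that is, when the weak classifier beats random guessing in the weighted sense. I would therefore state this weak-learnability condition explicitly and note that under it the decoupled solution is indeed the joint minimizer: the reduced objective is strictly convex in $\beta$ (its second derivative $S\,e^{\beta}+\frac{1}{(K-1)^2}(T-S)\,e^{-\beta/(K-1)}$ is positive), so the stationary point is the global minimum, and the $\Phi$-minimization is valid for all admissible $\beta$.
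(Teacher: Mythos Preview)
Your proposal is correct and follows essentially the same route as the paper: reduce the objective via the simplex inner-product identities $\langle\bs{w}_j,\bs{w}_k\rangle=1$ or $-1/(K-1)$, rewrite the sum as $S\,e^{\beta}+(T-S)\,e^{-\beta/(K-1)}$ (the paper writes this equivalently as $e^{\beta\cos\theta_K}T+(e^{\beta}-e^{\beta\cos\theta_K})S$ with $\cos\theta_K=-1/(K-1)$), then decouple by minimizing $S$ over $\Phi$ and minimizing over $\beta$ by convexity and first-order conditions. Your version is in fact slightly tidier---you skip the paper's unnecessary case split on $y_i=\Phi(\bs{x}_i)$ versus $y_i\neq\Phi(\bs{x}_i)$---and more careful: the paper asserts $e^{\beta}-e^{\beta\cos\theta_K}>0$ ``for $K\geq2$'' without qualification, whereas you correctly note this holds only for $\beta>0$ and tie it to the weak-learnability condition $\varepsilon^{(m+1)}<1/K$, which the paper only mentions after the proof.
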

	
	\begin{proof}
		Since there is a one-to-one correspondence between $\bs{g}(\bs{x})$ and $\Phi(\bs{x})$, we could replace $\bs{g}(\bs{x})$ by its corresponding $\Phi(\bs{x})$ in (\ref{GDS_ERM}). Note that $\alpha_{i, y_{i}}^{(m)}=0$, then we could obtain the following equivalent optimization problem
		\begin{align}\nonumber
		&\arg \min _{\Phi, \beta} \sum_{\{y_{i}=\Phi(\bs{x}_{i})\}} \sum_{k=1}^{K} \alpha^{(m)}_{i, k}e^{\beta\langle\bs{w}_{y_{i}}, \bs{w}_{k}\rangle}+\sum_{\{y_{i} \neq \Phi(\bs{x}_{i})\}}\left(\alpha^{(m)}_{i, \Phi(\bs{x}_{i})}e^{\beta}+ \sum_{k \neq \Phi(\bs{x}_{i})} \alpha^{(m)}_{i, k}e^{\beta\langle\bs{w}_{\Phi(\bs{x}_{i})}, \bs{w}_{k}\rangle}\right) \\\label{GDS_ERM_2} =&\arg \min _{\Phi, \beta}  e^{\beta \cos{\theta_{K}}} \sum_{i=1}^{n} \sum_{k=1}^{K} \alpha^{(m)}_{i, k}+\left(e^{\beta}-e^{\beta\cos{\theta_{K}}}\right) \sum_{i=1}^{n} \alpha^{(m)}_{i, \Phi(\bs{x}_{i})},
		\end{align}
		where $\theta_{K}$ is the angle between any two different vectors in $\bs{W} =\{\bs{w}_{1}, ..., \bs{w}_{K}\}$. 
		
		Since only the second term depends on $\Phi(\bs{x})$ and $e^{\beta}-e^{\beta\cos{\theta_{K}}}>0$ for $K\geq2$, solving for $\bs{g}^{(m+1)}(\bs{x})$ in (\ref{GDS_ERM_2}) is equivalent to searching for$$\Phi^{(m+1)}(\bs{x}) =\arg \min _{\Phi} \sum_{i=1}^{n} \alpha_{i, \Phi\left(\bs{x}_{i}\right)}^{(m)}.$$
		
		{After obtaining the optimal $\Phi^{(m+1)}(\bs{x})$, we plug it into (\ref{GDS_ERM_2}), then the optimization problem reduces to}
		$$\arg \min _{\beta} R(\beta)=e^{\beta \cos{\theta_{K}}} +\left(e^{\beta}-e^{\beta\cos{\theta_{K}}}\right)  \varepsilon^{(m+1)},
		$$
		where $$\varepsilon^{(m+1)}=\frac{\sum_{i=1}^{n} \alpha_{i, \Phi^{(m+1)}\left(\bs{x}_{i}\right)}^{(m)}}{\sum_{i=1}^{n} \sum_{k=1}^{K} \alpha_{i, k}^{(m)}}.$$
		
		Since $R(\beta)$ is convex in $\beta$, taking the derivative of $R(\beta)$ yields
		$$\frac{\partial R(\beta)}{\partial \beta} =\cos{\theta_{K}}e^{\beta \cos{\theta_{K}}}+\left(e^{\beta}-\cos{\theta_{K}}e^{\beta \cos{\theta_{K}}}\right)\varepsilon^{(m+1)}.
		$$ 
		
		Setting this derivative equal to zero, we have
		\begin{align}\nonumber
		\beta^{(m+1)}&=\dfrac{1}{1-\cos{\theta_{K}}}\left[\log \left(\frac{1-\varepsilon^{(m+1)}}{\varepsilon^{(m+1)}}\right)+\log (-\cos{\theta_{K}})\right]
		\\\nonumber&=\frac{K-1}{K}\left[\log \left(\frac{1-\varepsilon^{(m+1)}}{\varepsilon^{(m+1)}}\right)-\log (K-1)\right].
		\end{align}
		The last equation is derived based on the fact that $\cos{\theta_{K}}=\frac{1}{1-K}$ accroding to (\ref{W_j_def}).	
	\end{proof}
	
	We could easily verify that $\beta^{(m+1)} > 0$ if $\varepsilon^{(m+1)}<\frac{1}{K}$  for any $\alpha_{i, k}^{(m)}$. {One can find that this condition is equivalent to $\sum_{i=1}^{n} \alpha_{i, \Phi^{(m+1)}\left(\bs{x}_{i}\right)}^{(m)}<\frac{1}{K}\sum_{i=1}^{n} \sum_{k=1}^{K} \alpha_{i, k}^{(m)}$ according to (\ref{err_equ}). Since $\frac{1}{K}\sum_{i=1}^{n} \sum_{k=1}^{K} \alpha_{i, k}^{(m)}$ on the right hand side measures the expected weighted misclassification cost of random guessing, this implies that $\Phi^{(m+1)}(\bs{x})$ only needs to perform better than random guessing class labels.}
	
	Based on Lemma \ref{L1}, we can derive the angle-based cost-sensitive AdaBoost algorithm for multi-class classification that is outlined in Algorithm \ref{Algo_Ada}.
	\begin{remark}
		{The proposed Algorithm \ref{Algo_Ada} is very similar to the MultiBoost developed by \cite{Wang2013}. However, since the simplex class coding is applied in our algorithm, we utilize different candidate function $\bs{g}(\bs{x})$ and the least-angle prediction rule  instead of the max rule used in \cite{Wang2013}. } 
	\end{remark}
	
	\begin{algorithm} 
		\caption{Angle-Based Cost-Sensitive Multicategory AdaBoost}
		\begin{algorithmic}[1]\label{Algo_Ada}
			
			\REQUIRE Training set $\mathcal D=\{(\bs{x}_i,y_i)\}_{i=1}^n$ where $y_{i}\in\{1, 2, \ldots, K\}$ is the class label of example $\bs{x}_{i}$, cost matrix $\bs{C}$, and number $M$ of weak learners in the final decision function.\\
			\ENSURE Multicategory classifier $\bs{f}(\bs{x}).$ \\
			
			\STATE compute $\bs{W} =\{\bs{w}_{1}, ..., \bs{w}_{K}\}$ according to (\ref{W_j_def});
			\STATE initialize $$\alpha_{i, k}^{(0)}=\dfrac{C_{y_{i}, k}}{\sum_{i=1}^{n} \sum_{k=1}^{K} C_{y_{i}, k}}, \text { for } i=1, \ldots, n \text { and } k=1, \ldots, K;$$
			
			\FOR{$m = 0$ to $M-1$}
			\STATE solve (\ref{GDS_ERM}) as in Lemma \ref{L1} for $\left(\Phi^{(m+1)}, \beta^{(m+1)}\right)$;
			\STATE convert $\Phi^{(m+1)}(\bs{x})$ to $\bs{g}^{(m+1)}(\bs{x})$ and update $\bs{f} ^{(m+1)}(\bs{x}) = \bs{f} ^{(m)}(\bs{x})+
			\beta^{(m+1)}\bs{g}^{(m+1)}(\bs{x})$;
			\STATE update $\alpha_{i, k}^{(m+1)}=\alpha_{i, k}^{(m)}e^{\beta^{(m+1)}\langle\bs{g}^{(m+1)}(\bs{x}_{i}), \bs{w}_{k}\rangle}$;
			\STATE renormalize $\alpha_{i, k}^{(m+1)}$ by dividing it by $\sum_{i=1}^{n} \sum_{k=1}^{K} \alpha_{i, k}^{(m+1)}$;
			\ENDFOR
			\STATE compute $\bs{f}(\bs{x})=\sum_{m=1}^{M}\beta^{(m)}\bs{g}^{(m)}(\bs{x})$;
			
			\RETURN $\bs{f}(\bs{x})$.	
			
		\end{algorithmic}	
	\end{algorithm}
	
	\subsection{Cost-Sensitive LogitBoost.ML}
	By solving (\ref{ERM_our_loss}) with logit loss, we propose a novel cost-sensitive logit boosting algorithm by using the angle-based framework. Given the prespecified number of iterations $M$, the optimal classifier $\bs{f}(\bs{x})$ could be derived by minimizing
	
	\begin{equation}\label{Logit_ERM}
	\dfrac{1}{n} \sum_{i=1}^{n} \sum_{k=1}^K C_{y_{i},k} \log \left(1+e^{\left<\bs f(\bs{x}_{i}),\bs{w}_{k}\right>}\right),
	\end{equation}
	where $\bs{f}(\bs{x})=\sum_{m=1}^{M}\beta^{(m)}\bs{g}^{(m)}(\bs{x})$ with $\bs{g}^{(m)} \in \mathcal{G}$. {The gradient decent method is used here to search for the optimal $\bs{f}(\bs{x})$ \citep{Friedman2001, Zou2008}.}
	
	Note that given the current fit $\bs{f}^{(m)}(\bs{x})$,  the negative gradient of (\ref{Logit_ERM}) is equal to
	
	\begin{equation}\nonumber
	-{\dfrac{1}{n}}\sum_{k=1}^K  \dfrac{C_{y_{i},k}e^{\left<\bs f(\bs{x}_{i}),\bs{w}_{k}\right>}}{1+e^{\left<\bs f(\bs{x}_{i}),\bs{w}_{k}\right>}}\bs{w}_{k}
	\end{equation}
	for $i=1, 2, \ldots, n$. In order to find the optimal incremental direction $\bs{g}^{(m+1)}(\bs{x})$ that best approximates the negative gradient direction, we need to solve the following optimization problem:
	
	\begin{align}\nonumber
	\arg \max_{\bs{g}}\quad &\sum_{i}^{n} \sum_{k=1}^K  -\alpha^{(m)}_{i, k}\left\langle\bs{g}(\bs{x}_{i}), \bs{w}_{k}\right\rangle\\\label{constraint1}\text {subject to} \quad&\sum_{k=1}^{K} \left\langle \bs{g}, \bs{w}_{k} \right\rangle^{2}=\dfrac{K}{K-1}
	\end{align}
	with $\alpha^{(m)}_{i, k}=\dfrac{C_{y_{i},k}e^{\left<\bs f^{(m)}(\bs{x}_{i}),\bs{w}_{k}\right>}}{1+e^{\left<\bs f^{(m)}(\bs{x}_{i}),\bs{w}_{k}\right>}}$. The right-hand side of (\ref{constraint1}) is set arbitrarily since this constraint is imposed here only to insure that the value of $\left\langle \bs{g}, \bs{w}_{k} \right\rangle$ is bounded. 
	
	Similar to Subsection \ref{CS_Ada}, we still consider  $\bs{g}(\bs{x}):\mathbb{R}^{d}\rightarrow\mathcal Y_{a}=\{\bs{w}_{1}, \bs{w}_{2}, ..., \bs{w}_{K}\}$ and its corresponding decision rule $\Phi(\bs{x}):\mathbb{R}^{d}\rightarrow\mathcal Y=\{1, 2, ..., K\}$. With this setting, the equality constraint in (\ref{constraint1}) is satisfied, and the optimal candidate function $\bs{g}^{(m+1)}(\bs{x})$ could be found through 
	\begin{equation}\label{GDS_Frame_Logit}
	\bs{g}^{(m+1)}(\bs{x})=\arg \min_{\bs{g}} \sum_{i}^{n} \sum_{k=1}^K \alpha^{(m)}_{i, k}\left\langle\bs{g}(\bs{x}_{i}), \bs{w}_{k}\right\rangle.
	\end{equation}

	Since there exists a one-to-one correspondence between $\bs{g}(\bs{x})$ and $\Phi(\bs{x})$, we have the following lemma.
	
	\begin{lemma}\label{L2}
		The solution of (\ref{GDS_Frame_Logit}) is given by $
		\Phi^{(m+1)}(\bs{x}) =\arg \min _{\Phi} \sum_{i=1}^{n} \alpha_{i, \Phi\left(\bs{x}_{i}\right)}^{(m)}$.
	\end{lemma}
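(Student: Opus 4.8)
The plan is to mirror the argument used for Lemma \ref{L1}, exploiting the one-to-one correspondence between the candidate function $\bs{g}$ and the decision rule $\Phi$ together with the fixed inner-product geometry of the simplex vertices. First I would substitute $\bs{g}(\bs{x}_i) = \bs{w}_{\Phi(\bs{x}_i)}$ into the objective of (\ref{GDS_Frame_Logit}); this is legitimate precisely because $\bs{g}$ ranges over $\mathcal{Y}_a = \{\bs{w}_1,\dots,\bs{w}_K\}$, so choosing $\bs{g}$ is the same as choosing a label assignment $\Phi$, and under this setting the norm constraint in (\ref{constraint1}) is automatically met.

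The key computational step is to evaluate the inner sum $\sum_{k=1}^K \alpha^{(m)}_{i,k}\langle \bs{w}_{\Phi(\bs{x}_i)}, \bs{w}_k\rangle$ for each fixed $i$. Here I would invoke the fact that every $\bs{w}_j$ has unit norm and that the angle between any two distinct vertices is constant, so $\langle \bs{w}_j,\bs{w}_k\rangle = 1$ when $j=k$ and $\langle \bs{w}_j,\bs{w}_k\rangle = \cos\theta_K = \tfrac{1}{1-K}$ when $j\neq k$, exactly the identity used at the end of the proof of Lemma \ref{L1}. Separating the $k=\Phi(\bs{x}_i)$ term from the remaining $K-1$ terms and writing $\sum_{k\neq\Phi(\bs{x}_i)}\alpha^{(m)}_{i,k} = \sum_{k=1}^K \alpha^{(m)}_{i,k} - \alpha^{(m)}_{i,\Phi(\bs{x}_i)}$, the inner sum collapses to $\tfrac{K}{K-1}\alpha^{(m)}_{i,\Phi(\bs{x}_i)} + \tfrac{1}{1-K}\sum_{k=1}^K \alpha^{(m)}_{i,k}$.

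Summing over $i$, the objective in (\ref{GDS_Frame_Logit}) becomes $\tfrac{K}{K-1}\sum_{i=1}^n \alpha^{(m)}_{i,\Phi(\bs{x}_i)}$ plus the offset $\tfrac{1}{1-K}\sum_{i=1}^n\sum_{k=1}^K \alpha^{(m)}_{i,k}$, which is independent of $\Phi$. Since $\tfrac{K}{K-1}>0$ for $K\geq 2$, minimizing over $\Phi$ is equivalent to minimizing $\sum_{i=1}^n \alpha^{(m)}_{i,\Phi(\bs{x}_i)}$, which yields the claimed solution $\Phi^{(m+1)}(\bs{x}) = \argmin_\Phi \sum_{i=1}^n \alpha^{(m)}_{i,\Phi(\bs{x}_i)}$. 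I do not expect a genuine obstacle in this lemma; the only points requiring care are tracking the label-independent offset $\tfrac{1}{1-K}\sum_{i,k}\alpha^{(m)}_{i,k}$ so that it may be discarded, and confirming that the leading coefficient $\tfrac{K}{K-1}$ is positive so that the minimization direction is preserved rather than reversed.
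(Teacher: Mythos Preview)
Your proposal is correct and follows essentially the same route as the paper: substitute $\bs{g}(\bs{x}_i)=\bs{w}_{\Phi(\bs{x}_i)}$, use the simplex geometry $\langle\bs{w}_j,\bs{w}_k\rangle=\cos\theta_K=\tfrac{1}{1-K}$ for $j\neq k$, and split off the $\Phi$-independent term. The only cosmetic difference is that the paper first partitions the outer sum according to whether $y_i=\Phi(\bs{x}_i)$ before arriving at the same expression $\cos\theta_K\sum_{i,k}\alpha^{(m)}_{i,k}+(1-\cos\theta_K)\sum_i\alpha^{(m)}_{i,\Phi(\bs{x}_i)}$, whereas you go there directly; your computation is slightly cleaner but otherwise identical in substance.
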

	
	\begin{proof}
		
		Replacing $\bs{g}(\bs{x})$ by its corresponding $\Phi(\bs{x})$ in (\ref{GDS_Frame_Logit}) leads to the following equivalent optimization problem  
		\begin{align}\nonumber
		&\arg \min _{\Phi} \sum_{\{y_{i}=\Phi(\bs{x}_{i})\}} \sum_{k=1}^{K} \alpha^{(m)}_{i, k}\langle\bs{w}_{y_{i}}, \bs{w}_{k}\rangle+\sum_{\{y_{i} \neq \Phi(\bs{x}_{i})\}}\left(\alpha^{(m)}_{i, \Phi(\bs{x}_{i})}+ \sum_{k \neq \Phi(\bs{x}_{i})} \alpha^{(m)}_{i, k}\langle\bs{w}_{\Phi(\bs{x}_{i})}, \bs{w}_{k}\rangle\right) \\\label{GDS_ERM_3} =&\arg \min _{\Phi} \cos{\theta_{K}} \sum_{i=1}^{n} \sum_{k=1}^{K} \alpha^{(m)}_{i, k}+\left(1-\cos{\theta_{K}}\right) \sum_{i=1}^{n} \alpha^{(m)}_{i, \Phi(\bs{x}_{i})}.
		\end{align}
		
		Since only the second term is dependent of $\Phi(\bs{x})$ and $1-\cos{\theta_{K}}>0$ for $K>2$, solving (\ref{GDS_Frame_Logit}) for $\bs{g}^{(m+1)}(\bs{x})$ is equivalent to solving
		$$\Phi^{(m+1)}(\bs{x}) =\arg \min _{\Phi} \sum_{i=1}^{n} \alpha_{i, \Phi\left(\bs{x}_{i}\right)}^{(m)}.$$	
		This completes the proof.
	\end{proof}
	
	Based on Lemma \ref{L2}, we can induce $\bs{g}^{(m+1)}(\bs{x})$ from the fitted classifier $\Phi^{(m+1)}(\bs{x})$. Then the step length $\beta^{(m+1)}$ could be calculated by solving
	\begin{equation}\label{beta_logit}
	\arg\min_{\beta} R(\beta)=\sum_{i=1}^{n} \sum_{k=1}^{K}C_{y_{i},k}\log\left(1+\gamma^{(m)}_{i,k}e^{\beta\left\langle\bs{g}^{(m+1)}(\bs{x}_{i}), \bs{w}_{k}\right\rangle}\right),
	\end{equation}
	where $\gamma^{(m)}_{i,k}=e^{\langle\bs{f}^{(m)}(\bs{x}_{i}), \bs{w}_{k}\rangle}$. However, it is difficult to obtain its analytic solution. So several commonly used optimization algorithms could be applied here to find the optimal $\beta^{(m+1)}$, such as quasi-Newton method. Afterwards, the current model could be updated by $\bs{f}^{(m+1)}(\bs{x})=\bs{f}^{(m)}(\bs{x})+\beta^{(m+1)} \bs{g}^{(m+1)}(\bs{x})$.

	The angle-based cost-sensitive LogitBoost algorithm for multi-class classification problem is outlined in Algorithm \ref{Algo_Logit}.
	
	\begin{algorithm} 
		\caption{Angle-Based Cost-Sensitive Multicategory LogitBoost}
		\begin{algorithmic}[1]\label{Algo_Logit}
			
			\REQUIRE Training set $\mathcal D=\{(\bs{x}_i,y_i)\}_{i=1}^n$ where $y_{i}\in\{1, 2, \ldots, K\}$ is the class label of example $\bs{x}_{i}$, cost matrix $\bs{C}$, the initial value $\beta_{0}$, and number $M$ of weak learners in the final decision function.\\
			\ENSURE Multicategory classifier $\bs{f}(\bs{x}).$ \\
			
			\STATE compute $\bs{W} =\{\bs{w}_{1}, ..., \bs{w}_{K}\}$ according to (\ref{W_j_def});
			\STATE initialize \begin{align}\nonumber
			\alpha_{i, k}^{(0)}=\dfrac{C_{y_{i}, k}}{\sum_{i=1}^{n} \sum_{k=1}^{K} C_{y_{i}, k}} \quad\text{and}\quad
			\gamma^{(0)}_{i,k}=1
			\end{align}
			for $i=1, \ldots, n$ and $k=1, \ldots, K$;
			\FOR{$m = 0$ to $M-1$}
			\STATE solve (\ref{GDS_Frame_Logit}) as in Lemma \ref{L2} for $ \Phi^{(m+1)}$;
			\STATE convert $\Phi^{(m+1)}(\bs{x})$ to $\bs{g}^{(m+1)}(\bs{x})$;
			\STATE compute $\beta^{(m+1)}$ as shown in (\ref{beta_logit}); 
			\STATE update $\gamma_{i, k}^{(m+1)}=\gamma_{i, k}^{(m)}e^{\beta^{(m+1)}\langle\bs{g}^{(m+1)}(\bs{x}_{i}), \bs{w}_{k}\rangle}$ and $\alpha_{i, k}^{(m+1)}=\dfrac{C_{y_{i},k}\gamma_{i, k}^{(m+1)}}{1+\gamma_{i, k}^{(m+1)}}$;
			\STATE renormalize $\alpha_{i, k}^{(m+1)}$ by dividing it by $\sum_{i=1}^{n} \sum_{k=1}^{K} \alpha_{i, k}^{(m+1)}$;
			\ENDFOR
			\STATE compute $\bs{f}(\bs{x})=\sum_{m=1}^{M}\beta^{(m)}\bs{g}^{(m)}(\bs{x})$;
			
			\RETURN $\bs{f}(\bs{x})$.	
			
		\end{algorithmic}	
	\end{algorithm}
	

	\section{Experiment Study}\label{Sec5}
	
	In this section, we evaluate the performance of the proposed angle-based cost-sensitive boosting algorithms both on synthetic and real datasets.
	
	\subsection{Numerical Experiments}
	To verify the effectiveness of the proposed algorithms, two simulated examples are designed in this subsection. We compare the proposed algorithms, namely Angle-Based Adaboost and Angle-Based Logitboost, with {AdaBoost.M2 \citep{Freund1997119}, AdaBoost.MH \citep{Schapire1999}, p-norm boosting \citep{Lozano2008}, and SAMME \citep{Zhu2009}} algorithms both in cost-insensitive and cost-sensitive scenarios. The number of boosting steps is set as $200$ and classification trees \citep{Breiman1984} with at most $4$ terminal nodes are used as base learners in all algorithms and examples.
	
	We use the test cost averaged over 100 independent simulation replications to evaluate the classification performance of each algorithm, which is defined as  
	\begin{equation}\nonumber
	{\rm TC}(\Phi)=\dfrac{1}{n_{t}}\sum_{i=1}^{n_{t}}\sum_{y_{i}\neq k} C_{y_{i},k}\mathbb{I}\left(\Phi(\bs{x}_{i})=k\right)
	\end{equation}
	with $n_{t}$ being the size of a test set. Besides, in all cost-insensitive scenarios, $C_{j,k}=\mathbb{I}(j\neq k)$ is applied. Specifically, the {BFGS method~\citep{Nocedal2006}} is utilized in Angle-Based Logitboost to search for the optimal step length. 
	
	\subsubsection{Simulation 1}
	We first apply a popular simulation example used in \cite{Breiman1984}, \cite{Zhu2009}, and \cite{Wang2013}, which is a three-class
	problem with $21$ features. In this simulation, a random sample $(\bs{x}_{i}, y_{i})$ with $i= 1, ..., 5000$ is generated independently from $y_{i}\sim {\rm uniform}\{1, 2, 3\}$ and $\bs{x}_{i}$ with $x_{ij}\sim N(\mu(y_{i},j), 1)$ where
	\begin{equation}\nonumber
	\mu(y_{i},j)=\left\{\begin{array}{ll}{u \cdot v_{1}(j)+(1-u) \cdot v_{2}(j) } & {\quad\text{if  } y_{i}=1,} \\ {u \cdot v_{1}(j)+(1-u) \cdot v_{3}(j) } & {\quad\text{if  } y_{i}=2,}\\
	{u \cdot v_{2}(j)+(1-u) \cdot v_{3}(j)} & {\quad\text{if  } y_{i}=3,}
	\end{array}\right.
	\end{equation}
	with $j= 1, ..., 21$, $u\sim {\rm uniform (0, 1)}$, and $v_{l}$ being the shifted triangular waveforms: $v_{1}(j)=\max (6-|j-11|,0)$, $v_{2}(j)=v_{1}(j-4)$ and $v_{3}(j)=v_{1}(j+4)$. The training set is chosen to be of size $300$ and the test set is of size $4700$. For cost-sensitive scenario, the misclassification cost matrix is set as in \cite{Wang2013}, where 
	\begin{equation}\nonumber
	\bs{C}=\left[\begin{array}{ll}{0 \quad 2 \quad 2} \\ {1 \quad 0 \quad 1}\\{1 \quad 1 \quad 0}\end{array}\right].
	\end{equation}
	
	\subsubsection{Simulation 2}
	
	In the second experiment, the simulation example proposed by \cite{Wang2013} is applied. This is a four-class problem with $10$ features. A random sample $(\bs{x}_{i}, y_{i})$ with $i= 1, ..., 5000$ is generated independently from $y_{i}\sim {\rm uniform}\{1, 2, 3, 4\}$ and $\bs{x}_{i}$ with $x_{i1}\sim N(\mu_{1}(y_{i}), 1)$, $x_{i2}\sim N(\mu_{2}(y_{i}), 1)$ and $x_{ij}\sim N(0, 1)$ for $j=3, ..., 10$, where
	\begin{align}\nonumber
	\mu_{1}(y_{i})&=3\left(\mathbb{I}(y_{i}=1)-\mathbb{I}(y_{i}=3)\right),\\\nonumber\mu_{2}(y_{i})&=3\left(\mathbb{I}(y_{i}=2)-\mathbb{I}(y_{i}=3)\right).
	\end{align}
	The size of training set and test set is still chosen to be $300$ and $4700$,  respectively. The misclassification cost matrix is also set as in \cite{Wang2013}, where
	$$
	\bs{C}=\left[
	\begin{matrix}
	0 & 1 & 2 &2 \\
	1 & 0 & 2 &2 \\
	0.5 & 0.5 & 0 & 1\\
	0.5 & 0.5 & 1 & 0
	\end{matrix}
	\right].$$
	
	Table \ref{SimuRes_table} shows the averaged test costs and their estimated standard errors over $100$ simulation replications of two simulated examples. These results clearly show that the two proposed angle-based boosting algorithms work well and are very competitive compared with other algorithms in all simulations. Especially in the simulated example 2, the proposed angle-based Logitboost achieves the lowest test costs in both cost-insensitive and cost-sensitive scenarios. In addition, Figures \ref{fig:1} and \ref{fig:2} display the test cost curves of all multi-class boosting algorithms as functions of boosting steps in two simulations. We can easily find that the test costs of all multi-class boosting algorithms decrease steadily as the number of iterations increases and then they stay almost flat, except that the test costs of AdaBoost.MH are generally growing after they achieve their minimums. In addition, because a fixed small step size is applied according to Lozano and Abe (2008), the decay speed of AdaBoost.M2 and p-norm boosting is relatively slower compared with other boosting algorithms.

	\begin{table} 
		\captionsetup{font={footnotesize}}
		\caption{Averaged test costs with their estimated standard errors inside parentheses for all boosting algorithms based on 100 replications of two simulations. The `c' in the first column indicates cost-sensitive scenario. Bold values represent the lowest averaged test cost achieved for each simulation.}
		\label{SimuRes_table}
		\footnotesize
		\centering
		\resizebox{!}{3.4cm}{
			\begin{tabular}{lcccccc}
				\toprule
				& {AdaBoost.MH}& {SAMME} & {AdaBoost.M2} & {p-norm Boost}&\thead[c]{Angle-Based \\Adaboost}&\thead[c]{Angle-Based\\ Logitboost}\\
				\midrule
				Simulation 1  & 0.252 & \textbf{0.182} & 0.264 & 0.211 & 0.201 & 0.204 \\
				& (0.0048) & \textbf{(0.0008)} & (0.0018)  & (0.0012)  & (0.0010)   &(0.0010)  \\
				Simulation 1c  & 0.300 & \textbf{0.237} & 0.389 & 0.293 &0.246 & 0.248 \\
				& (0.0072)& \textbf{(0.0014)} & (0.0021) & (0.0020) & (0.0014) & (0.0014)\\
				\hline
				Simulation 2  & 0.363 & 0.150 & 0.141 & 0.123 & 0.101 & \textbf{0.098} \\
				& (0.0058) &(0.0051) & (0.0051) & (0.0022) & (0.0007) & \textbf{(0.0006)}  \\
				Simulation 2c  &0.390&0.177&0.275&0.164&0.106&\textbf{0.100}\\
				&(0.0127)&(0.0054)&(0.0015)&(0.0020)&(0.0011)&(\textbf{0.0010)}    \\
				
				\bottomrule      
		\end{tabular}}
		\label{bs}
	\end{table}

	\begin{figure*}[!h]
		\centering
		\subfloat[Cost-Insensitive Scenario]{
			\includegraphics[width=200pt]{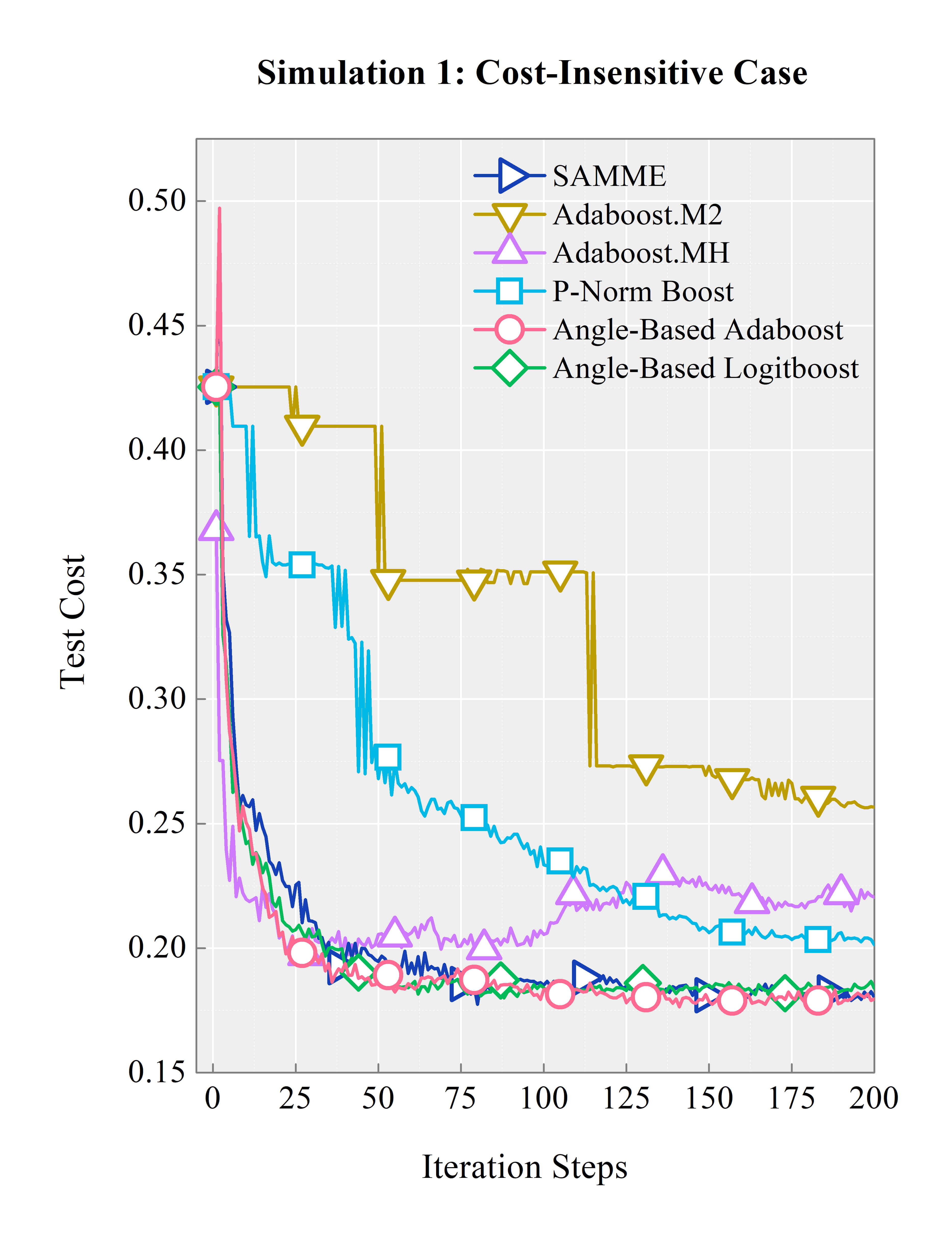}}
		\hspace{4pt}
		\subfloat[Cost-Sensitive Scenario]{
			\includegraphics[width=200pt]{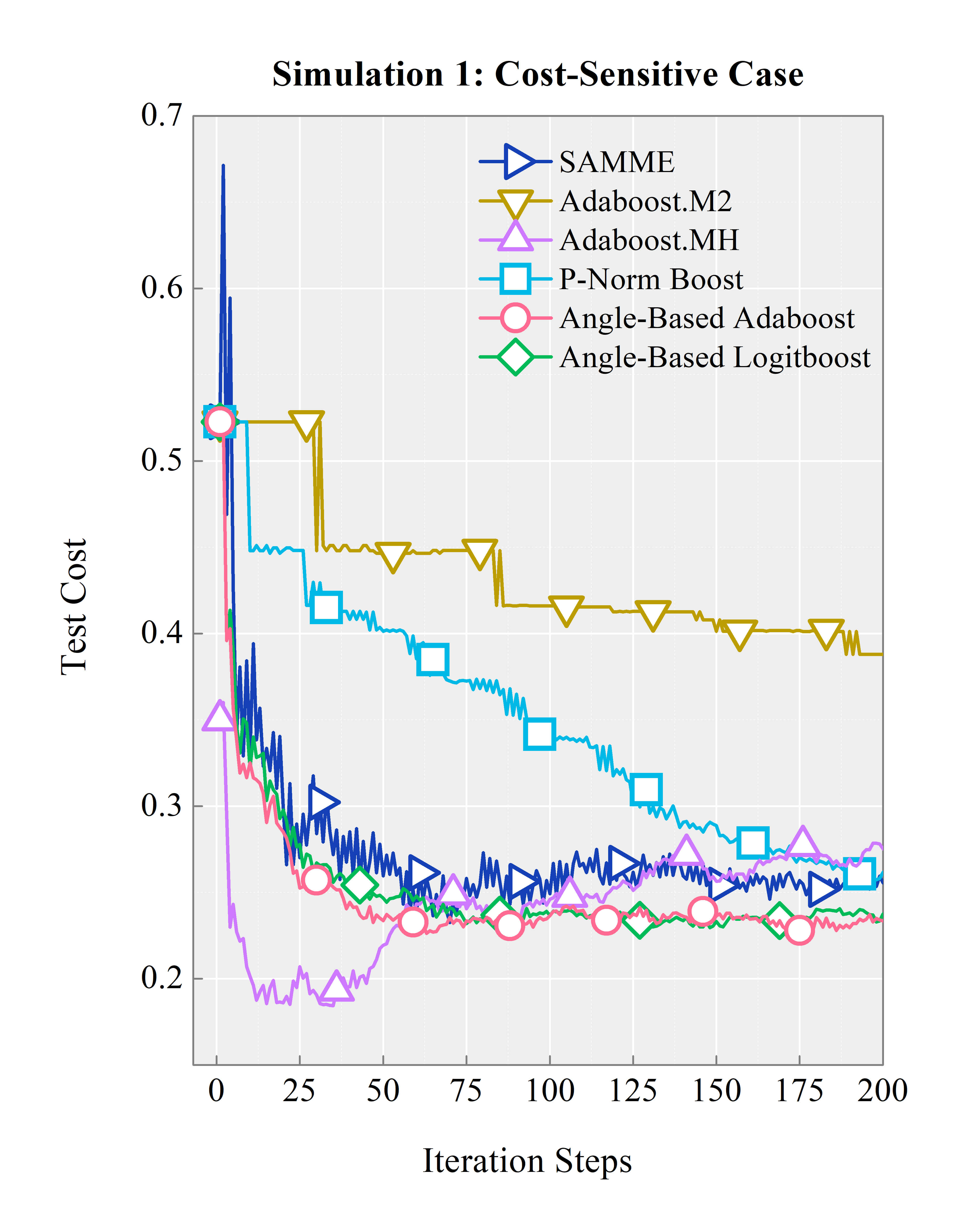}}
		\caption{Test costs of various boosting algorithms as a function of boosting steps in a randomly selected replication in Simulation 1.}
		\label{fig:1}
	\end{figure*}
	
	\begin{figure*}[h]
		\centering
		\subfloat[Cost-Insensitive Scenario]{
			\includegraphics[width=200pt]{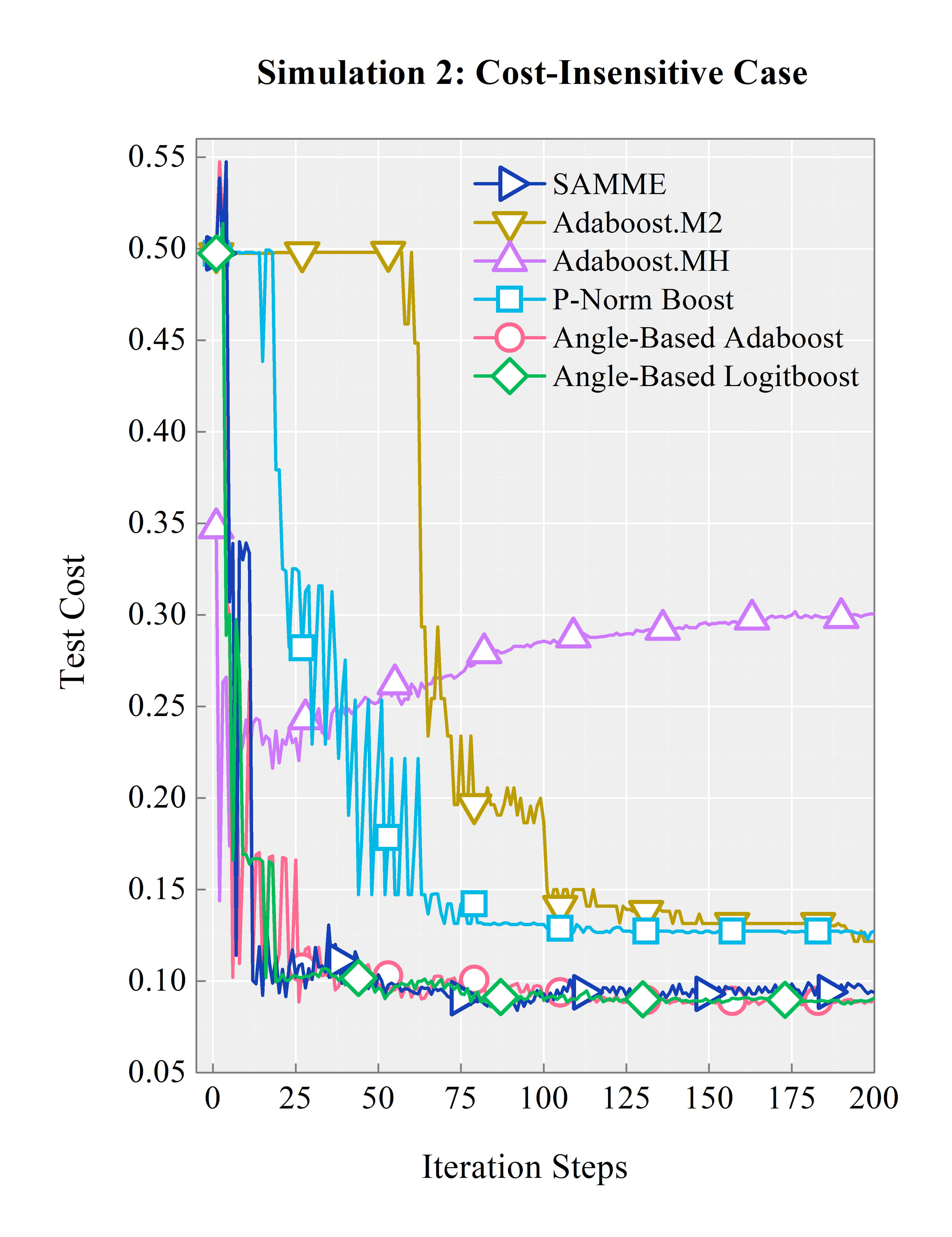}}
		\hspace{4pt}
		\subfloat[Cost-Sensitive Scenario]{
			\includegraphics[width=200pt]{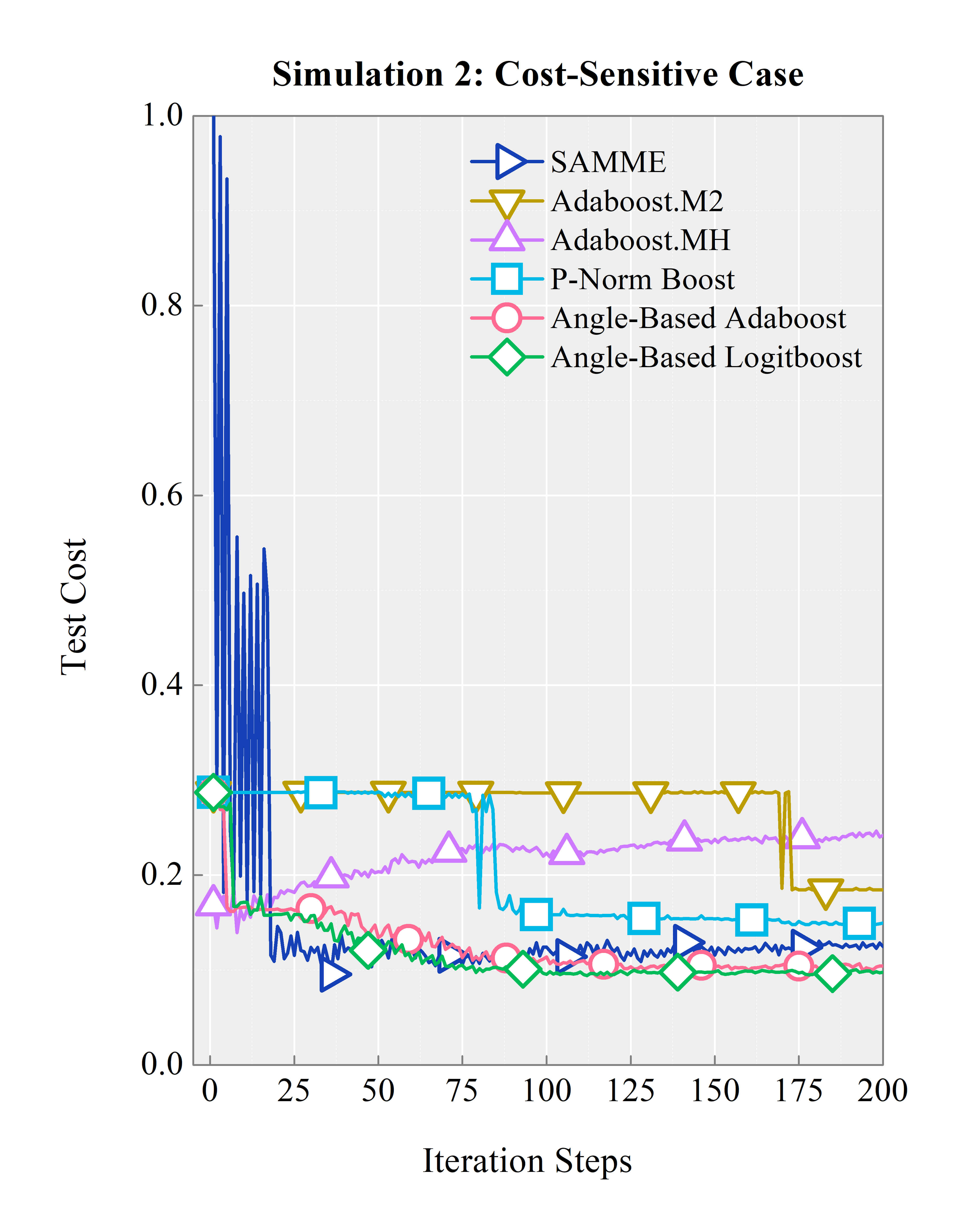}}
		\caption{Test costs of various boosting algorithms as a function of boosting steps in a randomly selected replication in Simulation 2.}
		\label{fig:2}
	\end{figure*}
	
	\subsection{Real Application}
	
	In this subsection, we verify the performance of our angle-based cost-sensitive boosting algorithms via a real-life credit dataset.
	
	In credit rating, borrowers are usually classified into several
	grades to represent their potential ability to pay back the debt and to indicate their risk level of default. Normally, credit rating agencies or financial institutions use letter designations such as A, B, C to represent the credit grade of borrowers or loans. Higher grades are intended to represent a lower probability of default. Classification algorithms are commonly used for credit rating prediction, which greatly support lenders to make more accurate decisions and reduce their loss. Due to the fact that misclassification costs vary across different
	classes in credit rating, multi-class cost-sensitive classifiers are more applicable in this real application.
	
	Hence, this subsection compares the proposed angle-based boosting algorithms and other boosting algorithms on a real credit evaluation dataset. The data used here are loan data from the Lending Club platform. Lending Club is the biggest P2P lending site in the U.S. and their data are publicly available for download \citep{Carlos2015}. We first collect the data of loans on Lending Club from January 2017 to March 2018, which contains $551,448$ observations with $148$ variables. In particular, a variable named \textit{grade} is the credit grade for loans assigned by Lending Club, which is a measure for borrower assessment. Specifically, Lending Club uses the borrower’s FICO credit
	scores along with other information provided in the borrower application to assign a loan credit grade ranging from A to G in descending credit ranks to each loan. After that, Lending Club’s interest rate is derived from the credit rating of the loan plus risk premium, which results in a strong correlation between the interest rate and the assigned loan credit rating \citep{Zhou2018}. Thus in this experiment, the \textit{grade} variable will be used as the class label and we will classify loans into seven credit grades, which is obviously a multi-class classification problem.
	
	The feature selection procedure is then carried out. We first delete some irrelevant features (like loan \textit{id numbe}r and \textit{URL} for the Lending Club page with listing data) as well as the features whose missing values are above 30 \% from the collected data. Some features that are correlated with the loan credit grade are also removed, such as the interest rate and the loan subgrade. This leads to 57 features that are finally preserved. Then, data points with missing values are imputed using a mean/mode replacement for continuous/categorical attributes, respectively. Categorical attributes are also converted by quoting dummy variables. Afterwards, a subsample containing 10,500 observations with 1,500 for each grade has been extracted to form our final processed dataset for the experiment. We randomly select {4\%} instances as the training set, and the remaining is for testing. The standardization procedure is also carried out to ensure each column of continuous attribute has zero mean and unit variance. The classification performance is finally measured by the averaged test cost over 20 independent replications of this credit example.
	
	\subsubsection{Cost-Insensitive Case}
	We first consider the cost-insensitive case, where the equal misclassification costs are applied. Table \ref{CreditRes_table} gives the averaged test costs and their estimated standard errors over $20$ replications of the credit example for all boosting algorithms. Figures \ref{fig:3} displays the test cost curves of all algorithms as functions of boosting steps in a randomly selected replication of credit example with equal misclassification costs. As can be seen from these results, both proposed Angle-Based Adaboost and Angle-Based Logitboost still work well and have very competitive performances as p-norm boosting even though in cost-insensitive case.
	
	\begin{table} 
		\captionsetup{font={footnotesize}}
		\caption{Averaged test cost with their estimated standard errors inside parentheses for all boosting algorithms based on 20 replications of the credit example. Bold values represent the lowest averaged test cost.}
		\label{CreditRes_table}
		\footnotesize
		\centering
		\begin{tabular}{lcccccc}
			\toprule	
			& {AdaBoost.MH}& {SAMME} & {AdaBoost.M2} & {p-norm Boost}&\thead[c]{Angle-Based \\Adaboost}&\thead[c]{Angle-Based\\ Logitboost}\\
			\midrule
			\multicolumn{7}{l}{Cost-Insensitive Case}\\
			\hline
			Equal Costs  &0.78 & 0.72 & 0.70 & \textbf{0.68} & \textbf{0.68} & \textbf{0.68}  \\
			& (0.0069) & (0.0035) & (0.0032) & \textbf{(0.0015)} & \textbf{(0.0013)} & \textbf{(0.0015)}  \\
			\hline
			\multicolumn{7}{l}{Cost-Sensitive Case}\\
			\hline
			Linear Costs  &2.21 & 1.24 & 1.38 & 1.14 & \textbf{1.10} & \textbf{1.10} \\
			& (0.0716) & (0.0072) & (0.0126) & (0.0043) & \textbf{(0.0039) }& \textbf{(0.0051)}\\
			\hline
			Partitioned- &19.61 & 6.45 & 2.72 & 2.60 & \textbf{2.36} & 2.39 \\
			Linear Costs
			& (1.1272) & (0.0724) & (0.0158) & (0.0080) & \textbf{(0.0113)} & (0.0248) 
			\\
			
			\bottomrule      
		\end{tabular}
		\label{bs}
	\end{table}
	
	\begin{figure*}[!h]
		\centering
		\includegraphics[width=200pt]{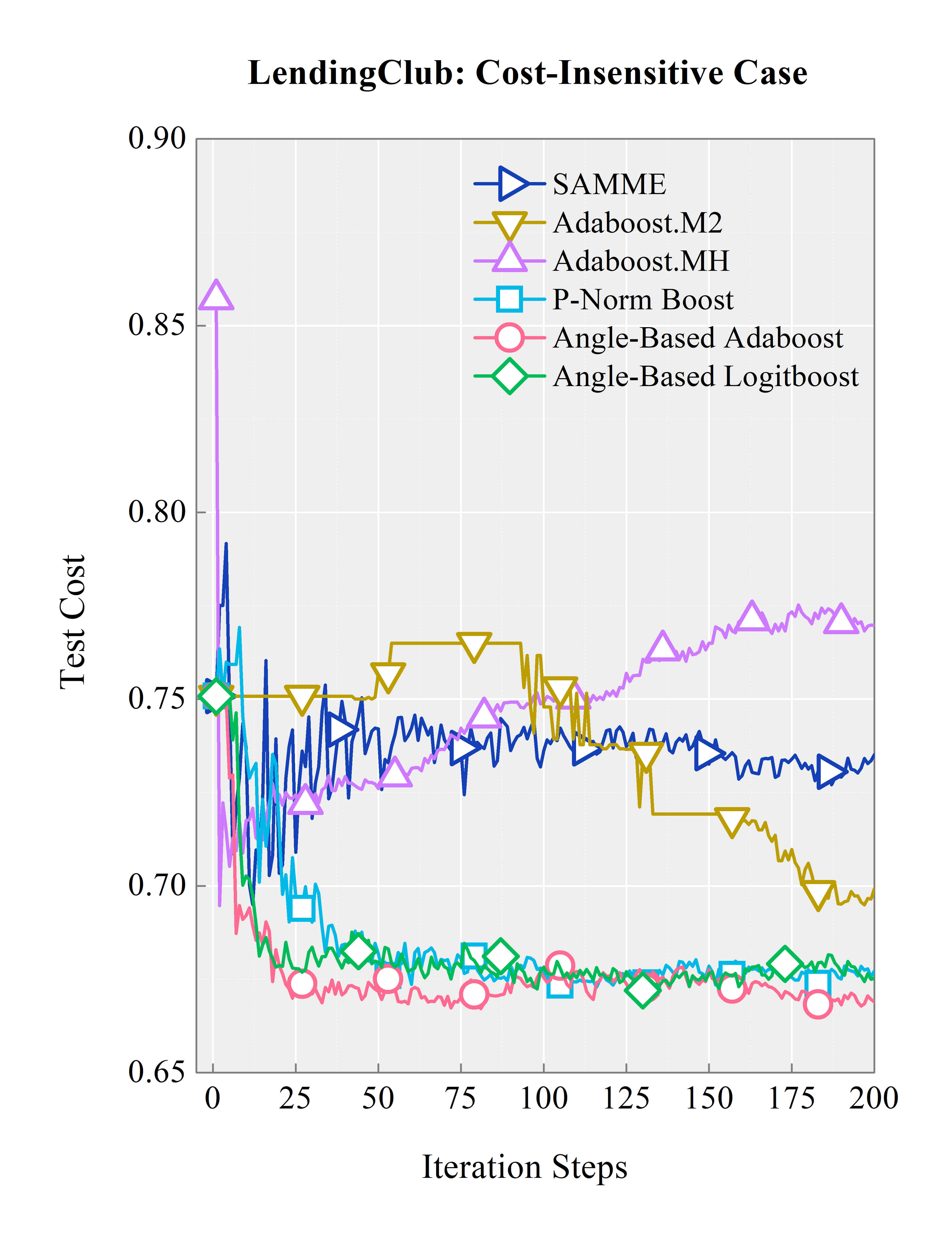}
		\caption{Test costs of various boosting algorithms as a function of boosting steps in a randomly selected replication of credit example in cost-insensitive case.}
		\label{fig:3}
	\end{figure*}

	\subsubsection{Cost-Sensitive Case}
	
	\paragraph{(1) Linear Costs}
	Normally, the costs of misclassification across classes are not uniform in credit rating. For example, the cost resulting from misclassifying a loan of grade C into grade A is larger than one due to misclassifying B into A. Thus, as suggested by \cite{Wang2018}, the linear cost matrix might be more appropriate for credit rating, which is of the form $$C_{j,k}=|j-k|, \quad j, k=1, 2, ..., 7,$$ with A recoded as 1, B as 2 and so on.
	
	The averaged test costs and the corresponding standard errors for various algorithms in this case are also presented in Table \ref{CreditRes_table}. Figure \ref{fig:4} (a) compares the test cost curves of all algorithms with the linear cost matrix. From these results, we can see that the proposed angle-based methods obviously outperform the others when linear cost matrix is applied.

	\begin{figure*}[h]
		\centering
		\subfloat[Linear Costs]{
			\includegraphics[width=200pt]{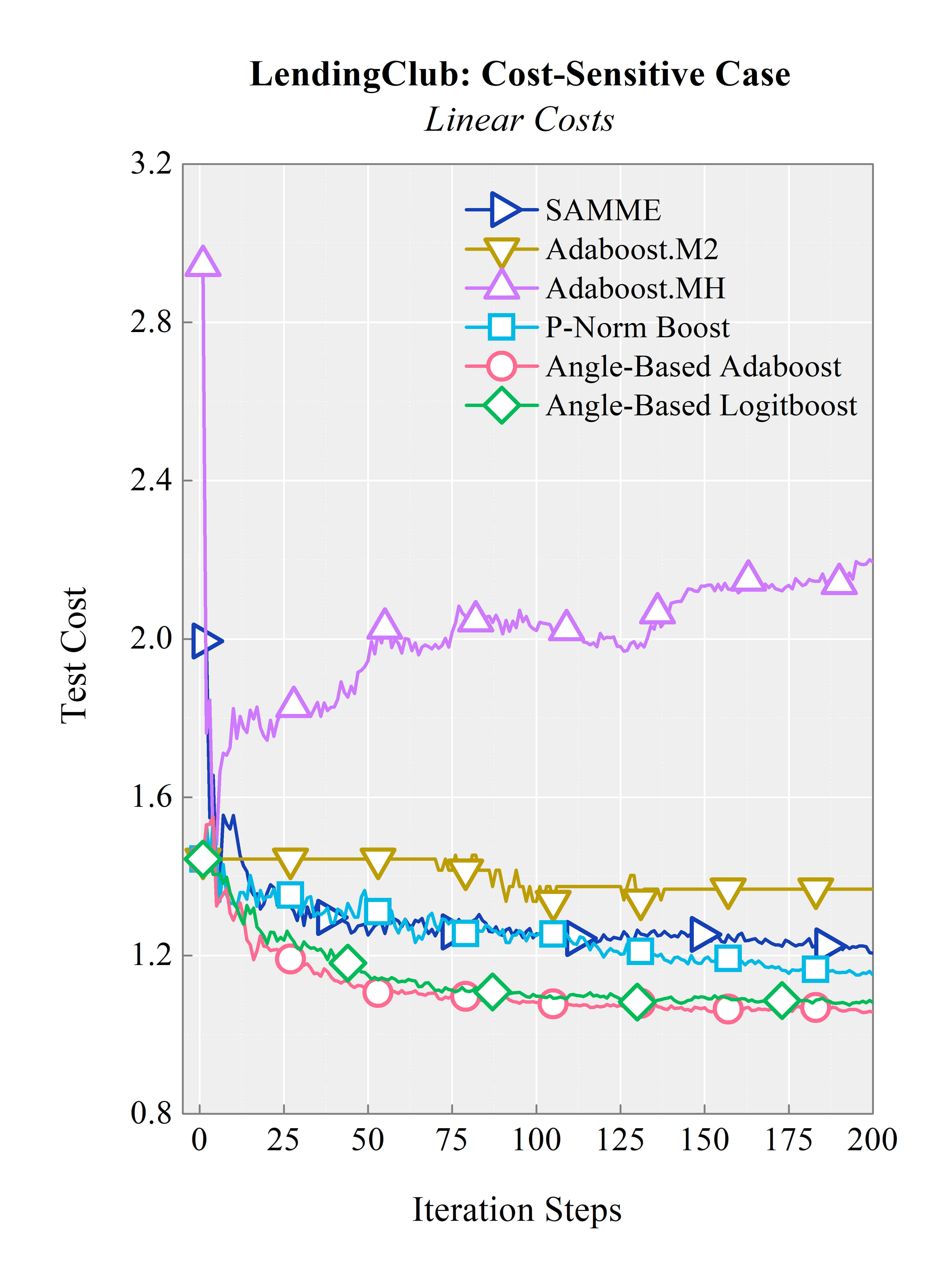}}
		\hspace{4pt}
		\subfloat[Partitioned-Linear Costs]{
			\includegraphics[width=200pt]{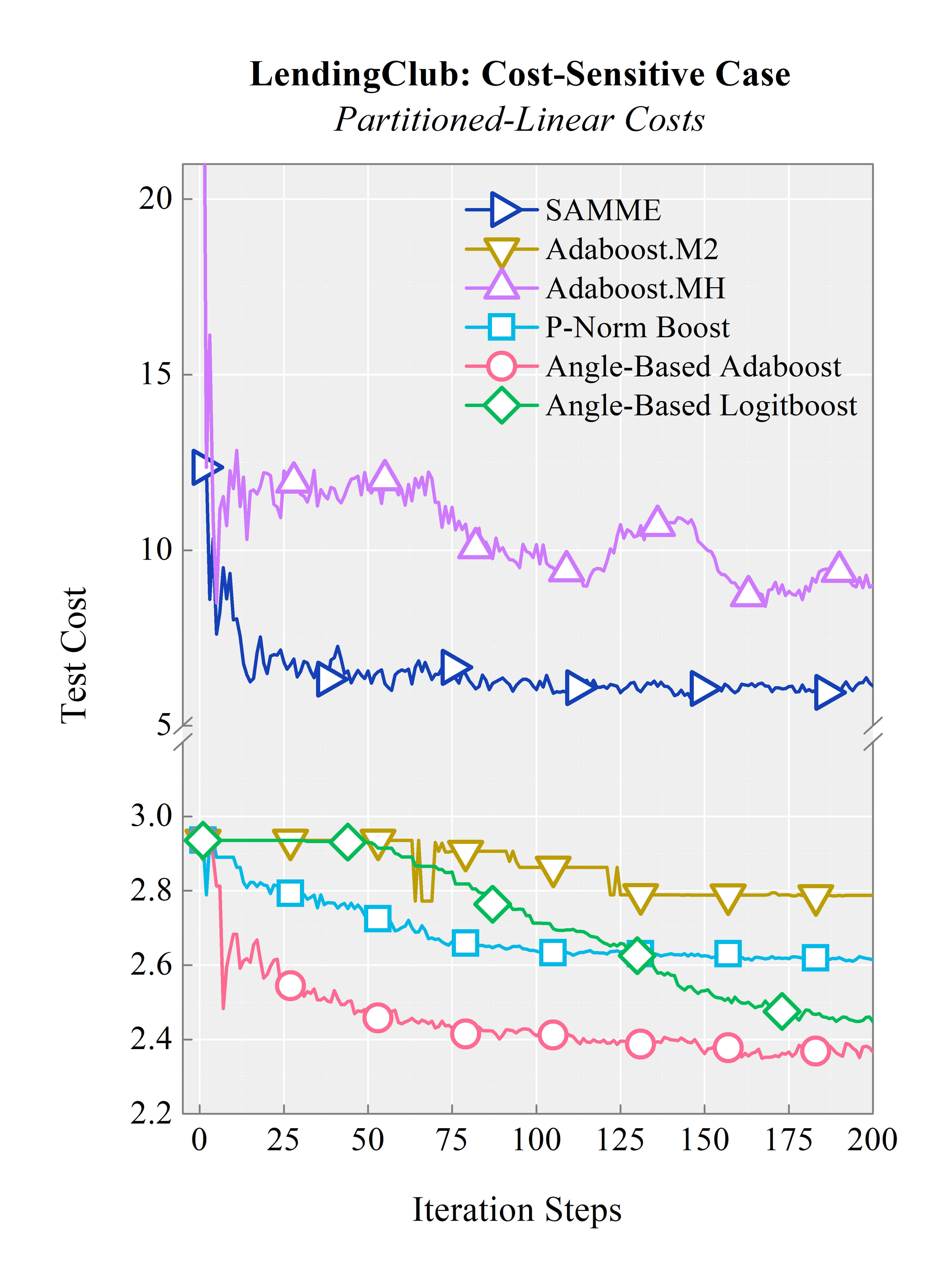}}
		\caption{Test costs of various boosting algorithms as a function of boosting steps in a randomly selected replication of credit example in cost-sensitive case.}
		\label{fig:4}
	\end{figure*}
	
	\paragraph{(2) Partitioned-Linear Costs}
	In fact, the cost of misclassifying bad credit as good is typically much higher than that of misclassifying good credit as bad \citep{Lessmann2015}. As suggested by \cite{Hand2008} and \cite{Wang2018}, the misclassification cost $C_{j,k}$ could be set as ten times $C_{k,j}$ for $j>k$ in credit rating problem. Then the corresponding partitioned-linear cost matrix is defined as 
	\begin{equation}\nonumber
	C_{j,k}=\left\{\begin{array}{ll}{k-j} & {\quad\text{if  } k\geq j,} \\ 
	{10\cdot(j-k)} & {\quad\text{if  } k<j,}
	\end{array}\right.
	\end{equation}
	for $j, k=1, 2, ..., 7.$
	
	The averaged test costs and the standard errors with partitioned-linear costs are also provided in Table \ref{CreditRes_table}. The test cost curves of all algorithms with the partitioned-linear cost matrix in a randomly selected replication are presented in Figure \ref{fig:4} (b). It can be seen from them that the proposed Angle-Based Adaboost works the best overall, with the Angle-Based Loogitboost following behind. This verifies again that the two proposed angle-based algorithms outperform the others in cost-sensitive case for this credit example.

	\section{Conclusion}\label{Sec6}
	\label{sec:conc}
	In this paper, we have proposed a general form of angel-based cost-sensitive multicategory loss function which has many desirable properties such as Fisher Consistency. It could be used to extend large-margin classifiers directly to multicategory versions  in cost-sensitive scenario. Furthermore, since the simplex coding is utilized in our framework, the typical sum-to-zero constraint is removed. Thus, the computational burden of our angle-based methods is reduced. To verify the usefulness of the proposed framework, two novel cost-sensitive multicategory boosting algorithms, namely Angle-Based Adaboost and Angle-Based Logitboost, also have been derived. Numerical experiments conducted on synthetic and real datasets confirm their competitive classification performance compared with other existing boosting algorithms. In future work, the extensions to other loss functions like the large-margin unified loss family \citep{Liu2011} could be carried out. Furthermore, the situation where the cost for correct classification is not equal to zero could also be considered.
	
	\section{Appendix}
	\label{sec:appendix}
	The Lemma 1 of \cite{Zhang2014} is presented as follows.
	\begin{lemma}\label{lemma_zhang}
		Suppose we have an arbitrary $\bs{f}\in\mathbb{R}^{K-1}$. For any $u, v\in\{1,..., K\}$ such that $u\neq v$, define $\bs{T}_{u,v}= \bs{w}_{u}-\bs{w}_{v}$. For any scalar $z\in\mathbb{R}$, $\left< (\bs{f}+z\bs{T}_{u,v}), \bs{w}_{\omega}\right>=\left<\bs{f}, \bs{w}_{\omega}\right>$, where $\omega\in\{1,..., K\}$ and $\omega\neq u, v$. Furthermore, we have that $\left<(\bs{f}+z\bs{T}_{u,v}), \bs{w}_{u}\right>-\left<\bs{f}, \bs{w}_{u}\right>=-\left<(\bs{f}+z\bs{T}_{u,v}), \bs{w}_{v}\right>+\left<\bs{f}, \bs{w}_{v}\right>$.
	\end{lemma}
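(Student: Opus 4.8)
The plan is to reduce both assertions to the symmetric Gram structure of the simplex vertices. The only facts I need are $\langle\bs{w}_j,\bs{w}_j\rangle=1$ for every $j$ and $\langle\bs{w}_i,\bs{w}_j\rangle=\frac{1}{1-K}$ for every $i\neq j$. The first is noted in the main text (each $\bs{w}_j$ has norm $1$), and the second is precisely the constant $\cos\theta_K=\frac{1}{1-K}$ already established in the proof of Lemma~\ref{L1}. If a self-contained derivation is preferred, I would compute these inner products directly from \eqref{W_j_def}: writing $\bs{w}_j=-a\bs{1}+b\bs{e}_{j-1}$ for $j\geq 2$ with $a=\frac{1+K^{1/2}}{(K-1)^{3/2}}$ and $b=\left(\frac{K}{K-1}\right)^{1/2}$, and using $\langle\bs{1},\bs{1}\rangle=K-1$, $\langle\bs{1},\bs{e}_{j-1}\rangle=1$, $\langle\bs{e}_{i-1},\bs{e}_{j-1}\rangle=\delta_{ij}$, the off-diagonal entries collapse to $\frac{1}{1-K}$ after a short simplification.

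For the first claim, I would invoke bilinearity of the inner product to obtain
\[
\langle\bs{f}+z\bs{T}_{u,v},\bs{w}_\omega\rangle-\langle\bs{f},\bs{w}_\omega\rangle=z\big(\langle\bs{w}_u,\bs{w}_\omega\rangle-\langle\bs{w}_v,\bs{w}_\omega\rangle\big).
\]
Because $\omega\neq u$ and $\omega\neq v$, both inner products on the right equal $\frac{1}{1-K}$, so their difference is zero and the increment vanishes for every $z$, giving $\langle\bs{f}+z\bs{T}_{u,v},\bs{w}_\omega\rangle=\langle\bs{f},\bs{w}_\omega\rangle$.

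For the second claim, the same expansion gives, on the left, $z\langle\bs{w}_u-\bs{w}_v,\bs{w}_u\rangle=z\left(1-\frac{1}{1-K}\right)$ using $\langle\bs{w}_u,\bs{w}_u\rangle=1$ and $\langle\bs{w}_v,\bs{w}_u\rangle=\frac{1}{1-K}$; on the right it gives $-z\langle\bs{w}_u-\bs{w}_v,\bs{w}_v\rangle=-z\left(\frac{1}{1-K}-1\right)=z\left(1-\frac{1}{1-K}\right)$. The two sides coincide, which is exactly the stated identity.

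I expect no genuine obstacle here: once the Gram structure is available, both parts are one-line consequences of bilinearity together with the cancellation (first claim) and the norm/off-diagonal symmetry (second claim). The only point needing mild care is verifying $\langle\bs{w}_i,\bs{w}_j\rangle=\frac{1}{1-K}$ directly from \eqref{W_j_def} when an index equals $1$ (the case $i=1$ uses $\bs{w}_1=(K-1)^{-1/2}\bs{1}$), but this is routine and can be bypassed altogether by citing the equal-angle property already recorded in the text.
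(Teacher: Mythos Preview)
Your argument is correct. Note, however, that the paper does not supply its own proof of this lemma: it is stated in the Appendix purely as a quotation of Lemma~1 from \cite{Zhang2014}, without any accompanying derivation. Your approach via the Gram structure of the simplex vertices ($\langle\bs{w}_j,\bs{w}_j\rangle=1$ and $\langle\bs{w}_i,\bs{w}_j\rangle=\frac{1}{1-K}$ for $i\neq j$) together with bilinearity is the natural and complete proof, and indeed is essentially how the result is established in the original reference.
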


	
	%
	%
	%
	%
	%
	
	%
	
	\bibliographystyle{asa}
	
	\bibliography{AngleBased}
\end{document}